\documentclass[letterpaper]{article} % DO NOT CHANGE THIS
\usepackage[preprint]{aaai2027}  % DO NOT CHANGE THIS
\usepackage[hyphens]{url}  % DO NOT CHANGE THIS
\usepackage{graphicx} % DO NOT CHANGE THIS
\usepackage{natbib}  % DO NOT CHANGE THIS AND DO NOT ADD ANY OPTIONS TO IT
\usepackage{caption} % DO NOT CHANGE THIS AND DO NOT ADD ANY OPTIONS TO IT

\usepackage{algorithm}
\usepackage{algorithmic}
\usepackage{amsmath}

\usepackage{subcaption}
\usepackage{bm}
\usepackage{amssymb,amsthm}
\newtheorem{theorem}{Theorem}
\newcommand{\Acc}{\mathrm{Acc}}
\usepackage{multirow}
\usepackage{tabularx}

\usepackage{newfloat}
\usepackage{listings}
\DeclareCaptionStyle{ruled}{labelfont=normalfont,labelsep=colon,strut=off} % DO NOT CHANGE THIS
\floatstyle{ruled}
\newfloat{listing}{tb}{lst}{}
\floatname{listing}{Listing}

\usepackage{booktabs}

\title{Learning Early-to-Final Solution Consistency for MILP Acceleration}
\author {
    Guanlin Li\textsuperscript{\rm 1, \rm 2}\equalcontrib,
    Chengrui Gao\textsuperscript{\rm 1, \rm 2}\equalcontrib,
    Chenguang Wang\textsuperscript{\rm 1, \rm 2},
    Haopu Shang\textsuperscript{\rm 1, \rm 2},
    Zherong Zhang\textsuperscript{\rm 1, \rm 2},
    Ke Xue\textsuperscript{\rm 1, \rm 2},
    Jixiang Lu\textsuperscript{\rm 3},
    Weiyong Yang\textsuperscript{\rm 3},
    Chao Qian\textsuperscript{\rm 1, \rm 2}\thanks{Corresponding author. Email: \protect\url{qianc@lamda.nju.edu.cn}.\\
    }
}
\affiliations {
    \textsuperscript{\rm 1}State Key Laboratory of Novel SoftwareTechnology, Nanjing University\\
    \textsuperscript{\rm 2}School of Artificial Intelligence,Nanjing University\\
    \textsuperscript{\rm 3}State Key Laboratory of Technology and Equipment for Defense Against Power System Operational Risks, Nari Technology Co., Ltd.
}
\newcommand{\ours}{EnCore}
\newcommand{\nd}{Neural Diving}
\newcommand{\ps}{Predict-and-Search}
\newcommand{\apollo}{Apollo-MILP}

\begin{document}

\maketitle

\begin{abstract}
Mixed-Integer Linear Programming (MILP) is a fundamental problem class in operations research and combinatorial optimization, with broad applications to industrial decision-making. Owing to their NP-hardness, however, modern solvers may struggle to find high-quality solutions for challenging MILP instances within practical time limits. Recent learning-based approaches seek to accelerate MILP solving by directly predicting high-quality solutions from static instance-level features, such as variable-constraint bipartite graphs. Yet accurate solution prediction from instance features alone is difficult, and these methods largely overlook the information revealed during the solver's search process. In this paper, we find that solutions produced at the early search stage of MILP solvers, which are computationally cheap to obtain, are often structurally close to the solutions found after full-budget search. 
Motivated by this observation, we propose a new solver-informed paradigm that shifts the learning target from variable assignment to \textit{early-to-final consistency}: for each variable, we predict whether its early-stage assignment should persist in full-budget solutions. The predicted consistency naturally guides downstream search, for instance by fixing the assignments deemed consistent. At inference time, we further ensemble consistency predictions across multiple early-stage solutions to improve robustness.
Experiments across four MILP benchmarks show our method improves prediction-guided search across diverse downstream pipelines. With Gurobi, our proposed method reduces the primal gap by 56.9\% on average and closes it completely on combinatorial auction instances. Besides, we transferred the Gurobi-trained model zero-shot to SCIP without adaptation, achieving a 36.4\% average gap reduction across benchmarks. Code is available at: \protect\url{https://github.com/lamda-bbo/EnCore}.
\end{abstract}

% Uncomment the following to link to your code, datasets, an extended version or similar.
% You must keep this block between (not within) the abstract and the main body of the paper.
% Make sure that you do not de-anonymize yourself with these links.
% \begin{links}
%     \link{Code}{https://aaai.org/example/code}
%     \link{Datasets}{https://aaai.org/example/datasets}
%     \link{Extended version}{https://aaai.org/example/extended-version}
% \end{links}

\section{Introduction}

Mixed-Integer Linear Programming (MILP) is a fundamental modeling language for combinatorial optimization and decision making, with broad applications in scheduling~\cite{floudas2005mixed}, routing~\cite{Laporte1992345}, network design~\cite{CRAINIC2000272} and production planning~\cite{NAM1992255}. Prevailing MILP solvers employ branch-and-bound~\cite{land1960automatic} frameworks to systematically explore the combinatorial search space while guaranteeing optimality. Built upon this, the MILP community has developed a variety of sophisticated techniques~\cite{padberg1991branch,achterberg2009scip} over the past decades, including presolve, cutting planes, primal heuristics, and branching strategies, which further reduce the search space and accelerate the solving process. However, solving MILP remains computationally expensive for modern solvers due to their inherent NP-hardness~\cite{karp-reducibility}. 

In real-world applications, most MILP instances are derived from similar problem classes with varying parameters, exhibiting shared structural patterns that can be naturally exploited by machine learning methods. As investigated by \citet{bengio2021machine}, learning models can capture these patterns from instance characteristics and historical solving experiences, and infer effective optimization strategies. Such learned knowledge can be integrated into MILP solvers to accelerate the optimization process~\cite{gasse2019exact,li2023configure,nair2020solving}. Existing learning-based methods for MILP can be broadly classified into three types: (1) \textit{learning solver decisions}, where machine learning models are employed to guide individual solver component, such as branching, node selection, and cutting plane selection~\cite{khalil2016learning,gasse2019exact,gupta2020hybrid,ferber2020mipaal,tang2020learning,ling2024stop,Strang2026planning}; (2) \textit{learning solver configurations}, where models automatically tune algorithmic parameters or select suitable configurations for different instances~\cite{hutter2009paramils,li2023configure}; and (3) \textit{learning solution predictions}, where models directly predict complete or partial solutions to provide high-quality initial solutions or reduce the search space~\cite{nair2020solving}.

Among the solution prediction methods, recent advances have focused on two directions: improving learning-guided search mechanisms and enhancing prediction capacity. On the search side, early studies such as neural diving~\cite{nair2020solving} fixed high-confidence variable assignments predicted by neural models, producing reduced MILP instances for efficient downstream optimization. Building upon this idea, \citet{han2023gnn} replaced hard variable fixing with a trust-region search strategy, reducing the risk of being trapped by inaccurate assignment. More recently, \citet{liu2025apollomilp} proposed to alternate prediction with trust-region correction, further improving robustness against prediction errors.  
On the prediction side, \citet{wang2026milpnet} represented MILPs as geometric feature sequences and used multi-scale attention to overcome expressiveness limitations of graph-based models. Meanwhile, \citet{li2026fmip} introduced a flow-matching generator that jointly models integer and continuous variables to enhance solution generation quality, while \citet{pu2026cocomilp} exploited variable interactions through inter-variable contrastive learning and an intra-constraint competitive graph network. However, these methods still follow the conventional paradigm of predicting a complete solution directly from the MILP formulation. Under this paradigm, the model is required to determine all variable assignments, leading to a prediction task that approaches the complexity of solving the original MILP.

In this paper, we begin by analyzing how primal solutions evolve during the search of modern MILP solvers. Across all evaluated benchmarks, we observe that the solution quality improves rapidly in the early stage of the search and then progresses much more slowly with only local corrections, indicating that early incumbents are highly informative about the final, full-budget assignments. On workload apportionment, for instance, an early incumbent already agrees with the final solution on \(95.63\%\) of the binary variables (as shown in Figure~\ref{fig:early_solution_quality}). Given such an informative starting point, the optimal solution need not be reconstructed from scratch. Conditioned on what the solver has already produced, the learning task reduces to deciding which of its assignments to trust.

Based on this insight, we propose a new \emph{solver-informed} paradigm that reformulates solution prediction as an \emph{early-to-final consistency estimation} problem. Given an MILP graph \(G\), an early solution \(X^{ES}\), and a reference solution \(X^*\) obtained from a full-budget run, conventional solution prediction learns \(P(X^*\mid G)\) and thus requires the model to infer all variable assignments from static instance features alone. In contrast, we estimate
\(
P(x_i^*=x_i^{ES}\mid G,X^{ES}),
\)
namely the probability that each early assignment persists in the final solution. 
This target can substantially ease the learning task. Conditioned on the rich anchor \(X^{ES}\), the model only needs to identify a relatively small fraction of assignments that have yet to stabilize.
Theoretically, we prove that conditioning on the early solution can strictly increases the best achievable accuracy of predicting the full-budget solution, under a mild posterior-crossing condition, and that this information gain provably persists under finite-sample model selection. 
Moreover, since the predicted consistent assignments are drawn from a feasible early solution, fixing them preserves feasibility and concentrates the subsequent search on assignments that the solver has left unresolved.

We evaluate our method on multiple MILP benchmarks with Gurobi~\cite{gurobi} as the target solver and integrate it into existing prediction-based search frameworks. Experimental results show that our proposed refiner consistently reduces the primal gap in most evaluated settings. Specifically, it reduces the primal gap of Predict-and-Search~\cite{han2023gnn} by 56.9\% on average and closes the gap on combinatorial auction instances. Furthermore, without retraining, the model trained with Gurobi-generated data can be directly transferred to SCIP~\cite{achterberg2009scip}. These results demonstrate that the proposed early-to-final consistency prediction paradigm is clearly effective and showcases transferability across different solvers.
\section{Preliminaries}

\subsection{Mixed Integer Linear Programming}

A mixed-integer linear program (MILP) is an optimization problem with a linear objective and linear constraints, where a subset of decision variables is restricted to integer values. We formulate a MILP instance as
\begin{equation}
\begin{aligned}
\min_{x \in \mathbf{R}^{n}} \quad & \bm{c}^\top \bm{x} \\
\text{s.t.} \quad & \mathbf{A} \bm{x} \le \bm{b}, \\
& \bm{l} \le \bm{x} \le \bm{u}, \\
& x_i \in \mathbf{Z} \; \forall i \in \mathcal{I},
\end{aligned}
\label{eq:milp}
\end{equation}
where \(\bm{x}=(x_1,\ldots,x_n)\) is the vector of decision variables, \(\bm{c} \in \mathbf{R}^n\) is the objective coefficient vector, \(\mathbf{A} \in \mathbf{R}^{m \times n}\) is the constraint matrix, and \(\bm{b} \in \mathbf{R}^m\) is the right-hand-side vector. The bounds \(\bm{l},\bm{u} \in (\mathbf{R} \cup \{\pm\infty\})^n\) defines the variable domains, and \(\mathcal{I} \subseteq \{1,\ldots,n\}\) is the index set of variables required to take integer values. An important special case is that of binary variables, for which \(x_i \in \{0,1\}\). General integer variables can be transformed into binary variables using standard preprocessing techniques.

\subsection{Graph Representations for MILP}

Learning-based MILP methods commonly encode an instance as a bipartite graph~\cite{Salvagnin2016Detecting,gasse2019exact}. For the MILP formulated in Eq.~(\ref{eq:milp}), the associated graph is defined as \(G=(\mathcal{V},\mathcal{C},\mathcal{E})\), where \(\mathcal{V}=\{v_1,\ldots,v_n\}\) and \(\mathcal{C}=\{q_1,\ldots,q_m\}\) denote the sets of variable nodes and constraint nodes, corresponding to decision variables and constraints, respectively. An edge \((v_i,q_j) \in \mathcal{E}\) is introduced when the coefficient \(\mathbf{A}_{ji}\) is non-zero.

Each node is initialized with features describing its role in the MILP~\cite{han2023gnn}. A variable node \(v_i\) contains the objective coefficient \(c_i\), the lower and upper bounds \(l_i\) and \(u_i\), and an indicator of whether \(x_i\) is continuous or integer. A constraint node \(q_j\) contains the right-hand-side value \(b_j\) and an encoding of its constraint sense, such as \(\le\), \(=\), or \(\ge\). In addition, each edge carries the matrix coefficient \(A_{ji}\) as its edge feature. Collectively, these node and edge attributes provide a description of the local algebraic structure of the MILP instance, which serves as the input representation for graph neural networks~(GNN).

\subsection{Solution Predictions for MILP}

Solution prediction methods learn a mapping from an MILP instance to variable assignments, which can be exploited to reduce the search space for acceleration. Most existing predictors take the static bipartite graph \(G\) as input and employ a graph convolutional network (GCN) or a related message-passing GNN to compute variable embeddings, from which the final predictions are decoded. 

Let \(h_i^{(\ell)}\) be the embedding of variable node \(v_i\) at layer \(\ell\), and let \(g_j^{(\ell)}\) be the embedding of constraint node \(q_j\). A common bipartite message-passing layer first aggregates messages from variables to constraints and then sends messages back to variables:
\begin{equation*}
\begin{array}{rcl}
    g_j^{(\ell+1)}
    &=&
    \phi_C^{(\ell)}
    \left(
        g_j^{(\ell)},
        \sum_{i:(v_i,q_j)\in\mathcal{E}}
        \psi_C^{(\ell)}(h_i^{(\ell)}, A_{ji})
    \right), \\[2mm]
    h_i^{(\ell+1)}
    &=&
    \phi_V^{(\ell)}
    \left(
        h_i^{(\ell)},
        \sum_{j:(v_i,q_j)\in\mathcal{E}}
        \psi_V^{(\ell)}(g_j^{(\ell+1)}, A_{ji})
    \right),
\end{array}
\end{equation*}
where \(\psi_C^{(\ell)}\) and \(\psi_V^{(\ell)}\) are message functions, and \(\phi_C^{(\ell)}\) and \(\phi_V^{(\ell)}\) are node-update functions. These functions are usually implemented as multilayer perceptrons with nonlinear activations. Each update depends on both neighboring node embeddings and the corresponding edge weight \(A_{ji}\).

After \(L\) message-passing layers, the predictor applies a variable-level readout to each final variable embedding. For binary integer variables, a common formulation trains a predictor \(f_\theta\) on the graph \(G\) to estimate a target solution \(X^*\):
\begin{equation*}
    \hat{p}_i
    =
    \sigma\!\left(r_\theta(h_i^{(L)})\right)
    =
    f_\theta(G)_i
    \approx P(x_i^*=1 \mid G),
\end{equation*}
where \(r_\theta\) is a learnable readout function and \(\sigma\) is the sigmoid function. The output \(\hat{p}_i\) is the predicted probability that variable \(i\) takes value one in the reference solution. 
The predictions can then be used to construct a partial assignment, define a neighborhood around a candidate solution, or fix variables before invoking the solver on a reduced MILP.

\section{Method}
\label{sec:method}
\subsection{Motivation}

We begin with an observation about the dynamics of modern MILP solvers: the primal and dual gaps typically decrease sharply at the beginning of the solve, after which progress becomes much slower and is spread over a long period of search. Figure~\ref{fig:solver_primal_gap} illustrates this behavior with the evolution of the primal gap on example instances; the same qualitative pattern holds across solvers and problem classes (App.~\ref{app:solver_behavior}). This behavior suggests a two-stage view of MILP solving. In the \emph{fast descent stage}, the solver rapidly discovers useful early solutions and tightens bounds through presolve, heuristics, cutting planes, and early branch-and-bound decisions. In the \emph{long exploration stage}, the solver spends most of its time searching for further improvements, proving bounds, and resolving difficult variables.

This two-stage view motivates a new prediction target for learning-based MILP solving. In contrast to the prevailing paradigm, which constructs a full solution directly from the static MILP graph, we propose an \textbf{E}arly-to-final solutio\textbf{n} \textbf{Co}nsistency p\textbf{re}diction paradigm, termed \textbf{EnCore}. Specifically, given an early solution \(X^{ES}\) collected after the fast-descent stage, we estimate the probability that the early assignment of each variable persists in the full-budget solution \(X^*\). This target distinguishes assignments that have likely stabilized from those that merit further exploration, thereby directing both learning and search effort toward the decisions that remain open after the fast descent.

The proposed paradigm rests on the premise that the early solution is informative of the final one. We now examine this premise empirically. Figure~\ref{fig:early_solution_quality} compares each early solution \(X^{ES}\) with the full-budget solution \(X^*\) by counting the flipped integer variables. Across benchmark classes, the early solutions exhibit agreement with the final ones, and the discrepancies tend to concentrate on a limited subset of variables rather than spreading arbitrarily across the instance. These observations indicate that the early solution carries rich information about the final solution, making it a valuable reference worth exploiting.

\begin{figure}[t]
    \centering
    \begin{subfigure}[t]{0.49\columnwidth}
        \centering
        \includegraphics[width=\linewidth]{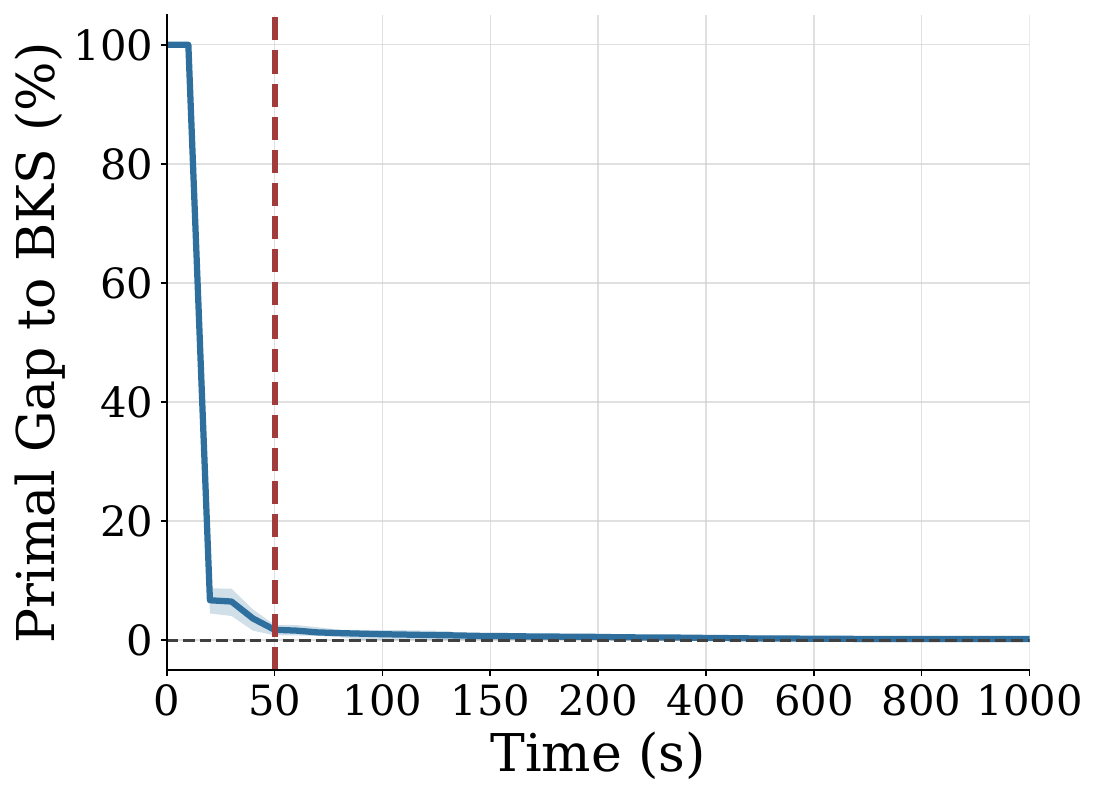}
        \caption{Gurobi Results}
    \end{subfigure}\hfill
    \begin{subfigure}[t]{0.49\columnwidth}
        \centering
        \includegraphics[width=\linewidth]{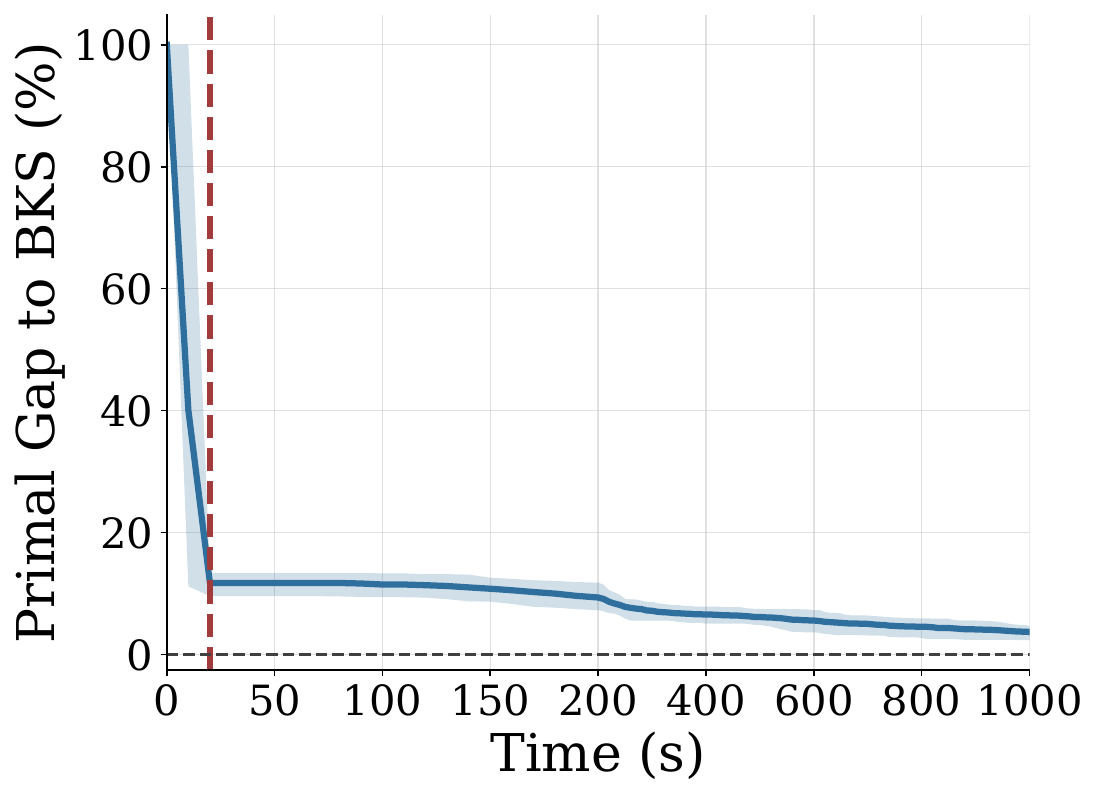}
        \caption{SCIP Results}
    \end{subfigure}\hfill
    \caption{Evolution of the primal gap over the solving process with Gurobi (left) and SCIP (right) on set covering instances. The time axis is scaled to highlight the early phase \(0\!-\!200\,\mathrm{s}\). Results on other problem classes are provided in App.~\ref{app:solver_behavior}.}
    \label{fig:solver_primal_gap}
\end{figure}

\begin{figure}[t]
    \centering
    \begin{subfigure}[t]{0.49\columnwidth}
        \centering
        \includegraphics[width=\linewidth]{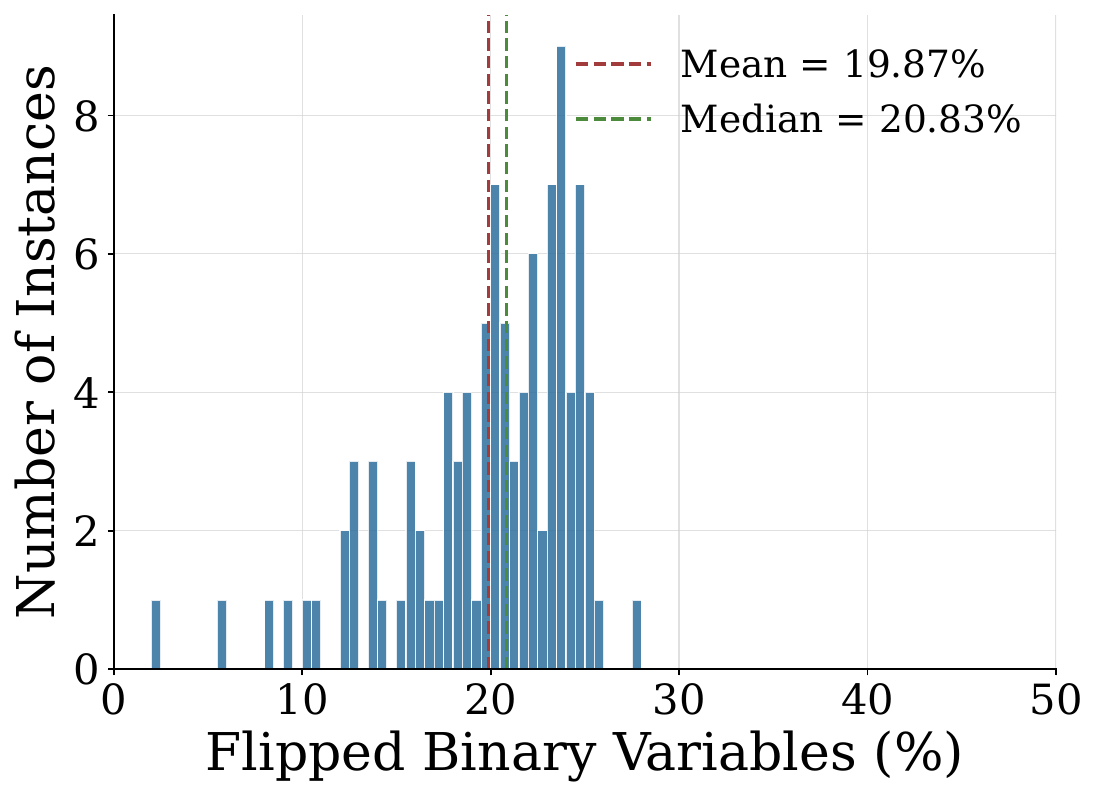}
        \caption{Gurobi on CA}
    \end{subfigure}\hfill
    \begin{subfigure}[t]{0.49\columnwidth}
        \centering
        \includegraphics[width=\linewidth]{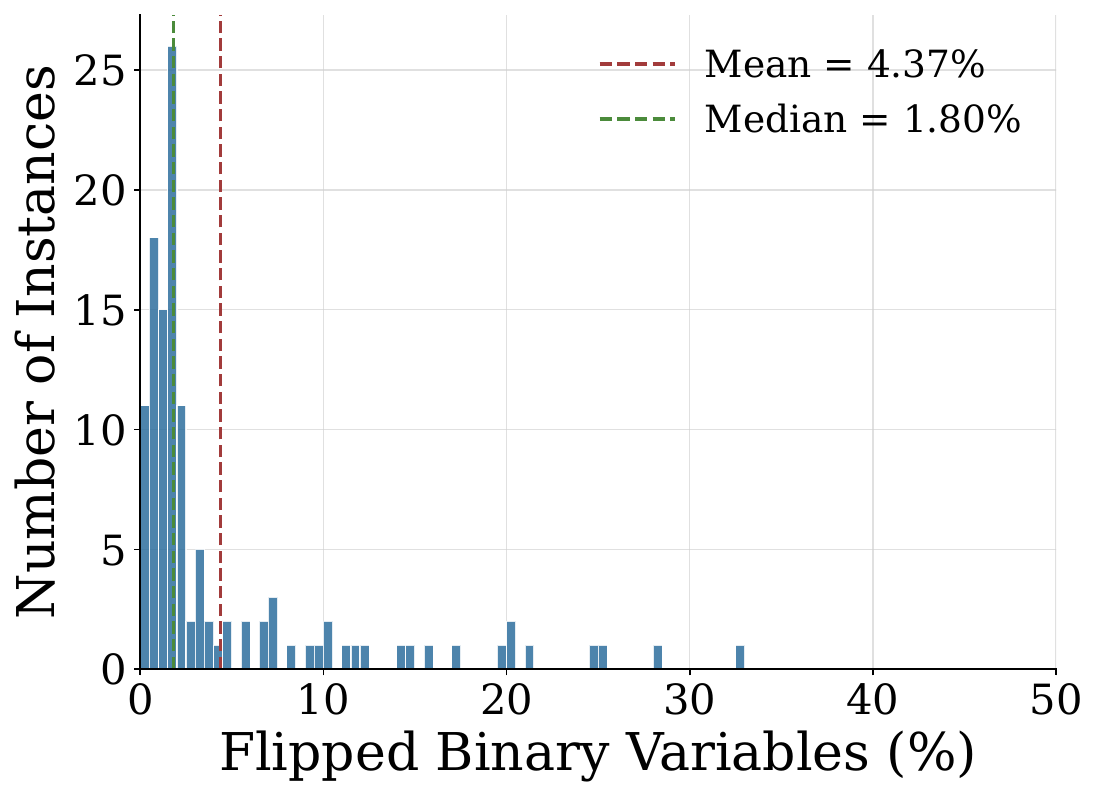}
        \caption{Gurobi on WA}
    \end{subfigure}
    \par\medskip
    \begin{subfigure}[t]{0.49\columnwidth}
        \centering
        \includegraphics[width=\linewidth]{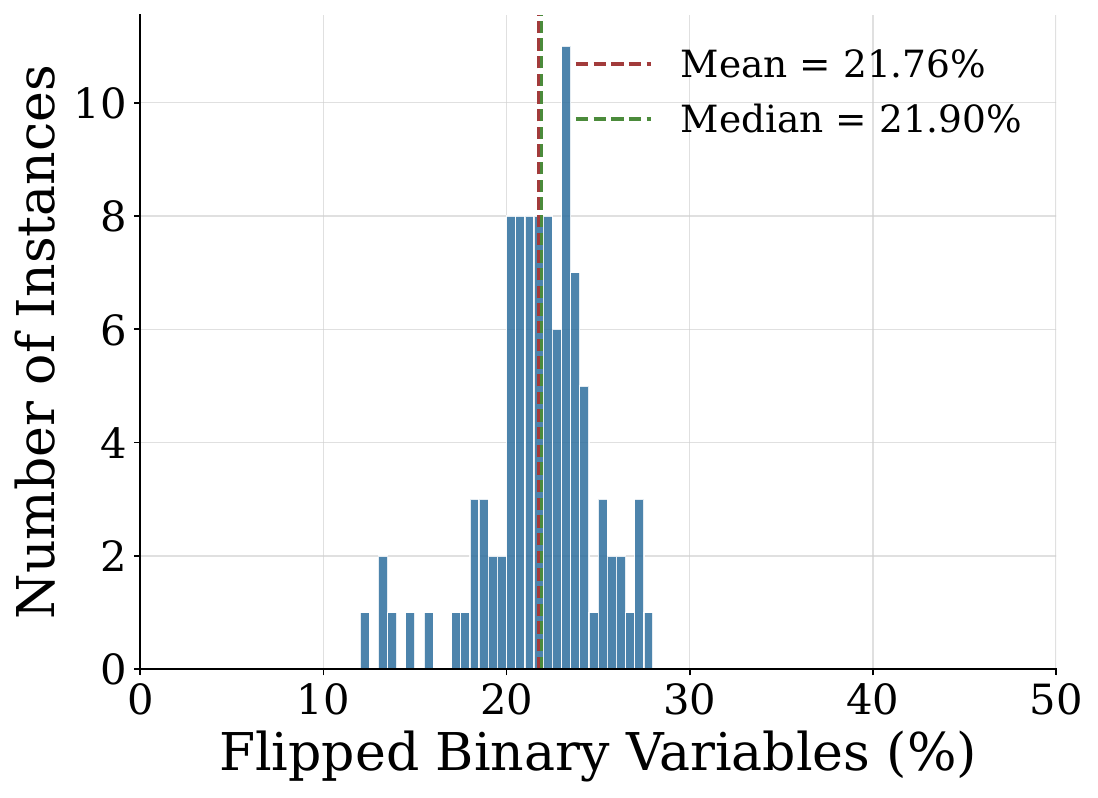}
        \caption{SCIP on CA}
    \end{subfigure}\hfill
    \begin{subfigure}[t]{0.49\columnwidth}
        \centering
        \includegraphics[width=\linewidth]{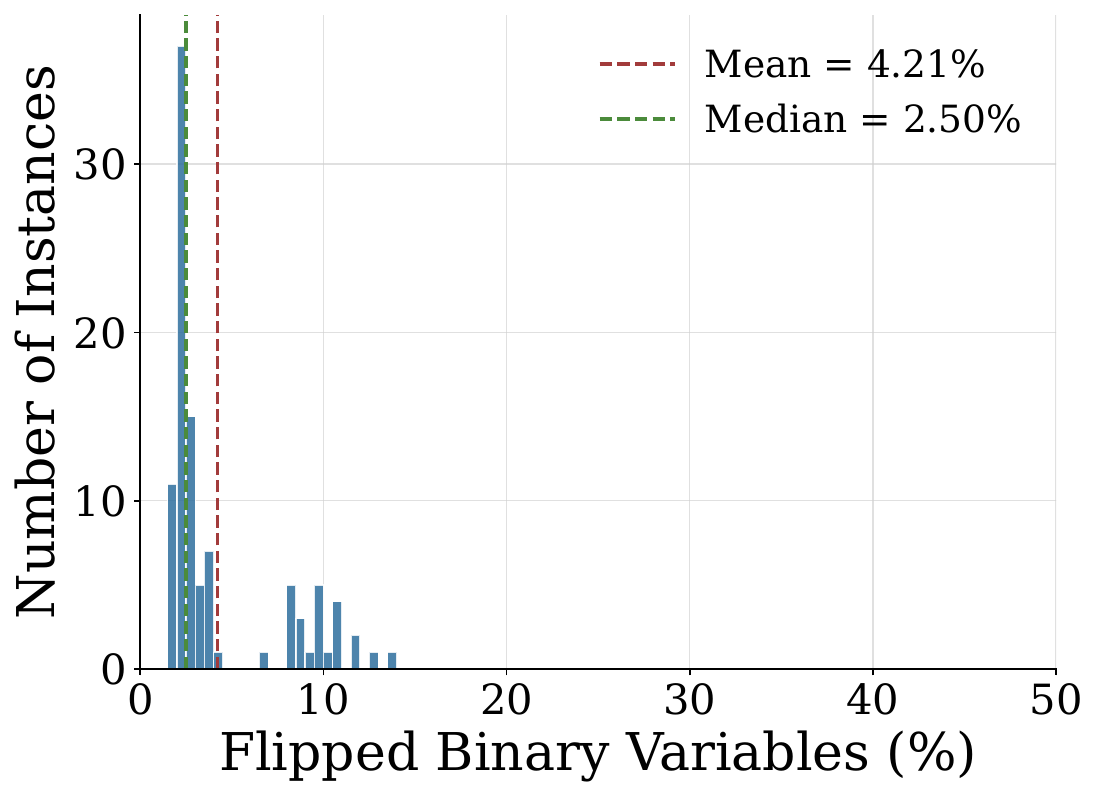}
        \caption{SCIP on WA}
    \end{subfigure}
    \caption{Distributions of flipped integer variables between early and full-budget solutions. `CA' and `WA' stand for combinatorial auction and workload apportionment, respectively.}
    \label{fig:early_solution_quality}
\end{figure}

\subsection{Consistency Prediction}

Given the MILP graph \(G\) and the collected early solution \(X^{ES}\), we design a consistency predictor that estimates, for each integer variable, the probability that its early assignment agrees with the full-budget solution \(X^*\). The predictor must fuse two heterogeneous sources of information: the structural description of the instance, encoded in the bipartite variable–constraint graph, and the variable assignments in the early solution. We achieve this fusion at the input level, where each integer-variable node is augmented with its early assignment $x_i^{ES}$ as an additional feature. Through rounds of variable-to-constraint and constraint-to-variable message passing, the network can thus assess each assignment jointly with the constraints it appears in and the assignments of neighboring variables, which provides the evidence needed to judge whether the assignment will persist. The backbone is a bipartite graph neural network with half-convolutions, a standard architecture for MILP representation learning \cite{han2023gnn,liu2025apollomilp}. A variable-level readout then produces the consistency probability $\hat{p}_i$ for each integer variable. Architectural details are provided in Appendix~\ref{app:model_structure}.

To train the predictor, we fundamentally reshape the learning target, rather than imitating the complete solutions \(X^*\). Since the early solution already encodes substantial information about the final one, the remaining task is not to predict every variable value from scratch, but to assess which early assignments are likely to persist in the final solution. We therefore define the binary consistency label \(y_i = \mathbf{1}[x_i^* = x_i^{ES}]\) for each integer variable \(i\), and train the predictor \(f_\theta\) so that
\begin{equation*}
    \hat{p}_i
    =
    f_\theta(G, X^{ES})_i
    \approx
    P(x_i^*=x_i^{ES}\mid G, X^{ES}).
\end{equation*}

For each training instance \(G\), we collect an early solution \(X^{ES}\) and pair it with the full-budget solution \(X^*\). The collection procedure is described later in this section. The predictor is then trained on the early-to-final consistency labels with the binary cross-entropy loss,
\begin{equation*}
    \mathcal{L}(\theta)
    =
    -\sum_{i\in\mathcal{I}}
    \left[
        y_i\log \hat{p}_i
        +(1-y_i)\log(1-\hat{p}_i)
    \right].
\end{equation*}
This consistency target reshapes the learning task: rather than constructing the full solution from the static graph alone, the model refines an informative reference solution by assessing the stability of each assignment, which is a more focused and better-conditioned problem. Moreover, the conditioning solution \(X^{ES}\) provides a rich source of instance-specific information that purely graph-based predictors cannot access.

\paragraph{Early Solution Collection. }
We collect early solutions at the transition from \textit{fast descent} to \textit{long exploration}. During the solving process, we monitor feasible-solution improvement events and estimate the local dual-gap decay rate over a sliding window, i.e., the gap reduction within the window divided by the elapsed time. The run is terminated once this rate first falls below a threshold, indicating that rapid bound improvement has mostly ended, and the last explored solution is taken as the early solution $X^{ES}$. In practice, we additionally impose a minimum probing time to avoid stopping before a meaningful dual bound, and a maximum time to restrict the runtime cost. Details are included in App.~\ref{app:es_collect}.

\paragraph{Inference-Time Augmentation.}
The early solution available for conditioning is not unique.  Depending on when the collection run terminates, the solver may return slightly different solutions, and a variable predicted stable under one of them may still be predicted fluctuating under another. To mitigate this sensitivity, we employ the last few improving solutions for inference-time augmentation. Let \(\{X_1, \ldots, X_K\}\) denote the last \(K\) improving solutions found during the collection run, with \(X_K = X^{ES}\). The trained predictor processes each \(X_k\) as the conditioning solution. Since a consistency score is always expressed relative to the value taken in its own conditioning solution, the resulting predictions must be aligned to the common reference \(X^{ES}\). To this end, we flip the sign of each logit whose corresponding variable takes a different value in \(X_k\) than in \(X^{ES}\), as a high consistency score for such a variable argues against the reference value.
The aligned logits are then averaged,
\begin{equation*}
    \bar{p}_i
    =
    \sigma\!\left(
        \frac{1}{K}\sum_{k=1}^{K}
        \tau_{k,i}
        \hat{p}_{k,i}
    \right),
\end{equation*}
where \(\tau_{k,i}=+1\) if \(x_{k,i}=x^{ES}_i\) and \(-1\) otherwise. The ensembled probability \(\bar{p}_i\) replaces \(\hat{p}_i\) as the
consistency score of \(x^{ES}_i\) used by the downstream search methods.

\subsection{Combine EnCore with Search Methods}

Our consistency predictor is independent of the downstream search strategy. In most learning-based MILP accelerators, the method must select a subset of variables whose predicted values are trusted, and the search space is then reduced accordingly.
The central question is therefore which assignments can be trusted. Existing methods trust an assignment when the model is confident in its own prediction, so both the value and its evidence come from the model alone. In contrast, our \textit{early-to-final consistency} design trusts values that the solver has already realized in a feasible early solution, and scores them by predicting whether they persist after the full solving process. Specifically, we rank variables by their consistency score \(\bar{p}_i\) and select the top-ranked ones. Each selected variable \(i\) is constrained to its early value \(x_i^{ES}\), while the remaining variables are left to the downstream search mechanism. This consistency-based design offers two main advantages: (1) \textit{Feasibility}. The fixed values are drawn from a feasible solution, so fixing them never destroys feasibility, which model-generated assignments cannot guarantee. (2) \textit{Alignment with the search process}. The consistency signals originate from the solver's own search trajectory, and thus naturally guide the model toward the variables that the solver has yet to resolve.

\section{Theoretical Analysis}

Our method depends on the premise that conditioning on the early solution
$X^{ES}$ makes predicting the full-budget solution $X^*$ fundamentally easier
than predicting from the instance graph alone. In this section, we formalize
it by answering two questions: (1) Can the early solution improve the best achievable prediction accuracy, and under what condition is the gain strict
(\textbf{Theorem~\ref{thm:info-gain}})? (2) Does this gain survive when the
predictor must be selected from finite training instances
(\textbf{Theorem~\ref{thm:finite-sample}})? Complete proofs are in Appendix~\ref{app:theoretical_proofs}.

Let $\mathcal{D}$ be the distribution over triples $(G, X^{ES}, X^*)$, and let
$J$ be a uniformly weighted variable index within an instance, so that
$\Pr(\widehat{x}_J = x^*_J)$ is the expected per-instance variable
accuracy, i.e., the population variable-level evaluation criterion used in our analysis. A message-passing
predictor at variable $J$ considers only the instance features in its receptive MILP graph
$\mathcal{R}_J$, while EnCore additionally attaches early assignments $X^{ES}_{\mathcal{R}_J}$ to construct the augmented input $\mathcal{U}_J = (\mathcal{R}_J, X^{ES}_{\mathcal{R}_J})$.

A solution predictor $h$ maps $\mathcal{R}_J$ to a prediction of $x^*_J$,
matching the conventional paradigm. A consistency predictor $a$
observes $\mathcal{U}_J$ and decides whether to retain the early value, inducing
\begin{equation*}
  h_a(\mathcal{U}_J) =
  \begin{cases}
    x^{ES}_J,     & a(\mathcal{U}_J) = 1,\\
    1 - x^{ES}_J, & a(\mathcal{U}_J) = 0.
  \end{cases}
\end{equation*}
Given $x^{ES}_J$, consistency decisions and final assignments determine each
other~(see Appendix~\ref{app:theoretical_proofs}), so the two paradigms can be compared as
predictors of $x^*_J$.

\paragraph{Information gain of the early solution.}
Define the best population accuracies attainable from each input,
\begin{align*}
  A^*_{\mathrm{sol}} &:= \sup_h \Pr\bigl(h(\mathcal{R}_J) = x^*_J\bigr),
  \\
  A^*_{\mathrm{con}} &:= \sup_a \Pr\bigl(h_a(\mathcal{U}_J) = x^*_J\bigr).
\end{align*}
If $\mathcal{D}$ were known, these would be achieved by the Bayes-optimal
rules on each input \citep{devroye1996probabilistic}. Their comparison isolates the informational
value of the early solution from any modeling concern.

\begin{theorem}
\label{thm:info-gain}
For any fixed early-solution collection procedure,
$A^*_{\mathrm{con}} \ge A^*_{\mathrm{sol}}$. Let
$\eta_J := \Pr(x^*_J = 1 \mid \mathcal{U}_J)$. The inequality is strict if
\begin{equation*}
    \Pr\!\left(
        \begin{aligned}
            &\Pr(\eta_J<\tfrac12\mid\mathcal R_J)>0\\
            &\text{and }\Pr(\eta_J>\tfrac12\mid\mathcal R_J)>0
        \end{aligned}
    \right)>0.
\end{equation*}
\end{theorem}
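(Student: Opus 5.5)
The plan has three steps: reduce both quantities to Bayes-optimal accuracies on their respective inputs, get the weak inequality from a coarsening argument, and get strictness by showing the Bayes gap is positive on the crossing event.

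\textbf{Step 1: bijection between the two predictor classes.} For fixed $x^{ES}_J\in\{0,1\}$, the map $a\mapsto h_a$ is a bijection onto all measurable predictors $g(\mathcal U_J)\in\{0,1\}$. Indeed, $a(\mathcal U_J)=\mathbf 1[g(\mathcal U_J)=x^{ES}_J]$ inverts it, and $x^{ES}_J$ is a coordinate of $\mathcal U_J$. Hence
\[
A^*_{\mathrm{con}}=\sup_g \Pr\bigl(g(\mathcal U_J)=x^*_J\bigr).
\]
By the standard Bayes argument \citep{devroye1996probabilistic}, this equals $\mathbb E[\max(\eta_J,1-\eta_J)]$. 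Similarly, with $\zeta_J:=\Pr(x^*_J=1\mid\mathcal R_J)$, we get $A^*_{\mathrm{sol}}=\mathbb E[\max(\zeta_J,1-\zeta_J)]$.

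\textbf{Step 2: weak inequality.} $\mathcal R_J$ is a measurable function of $\mathcal U_J$. So every solution predictor $h(\mathcal R_J)$ is also a predictor $g(\mathcal U_J)$, and the supremum over the larger class is at least as large. Equivalently, the tower property gives $\zeta_J=\mathbb E[\eta_J\mid\mathcal R_J]$. Since $\phi(t)=\max(t,1-t)$ is convex, conditional Jensen yields
\[
\phi(\zeta_J)\le \mathbb E[\phi(\eta_J)\mid\mathcal R_J],
\]
and taking expectations gives $A^*_{\mathrm{sol}}\le A^*_{\mathrm{con}}$. This holds for any collection procedure, because we only used that $\mathcal D$ is some joint law.

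\textbf{Step 3: strictness.} I would write the pointwise gap directly. Fix $\mathcal R_J$ and suppose without loss of generality $\zeta_J\ge\tfrac12$, so $\phi(\zeta_J)=\mathbb E[\eta_J\mid\mathcal R_J]$. Then
\[
\mathbb E[\phi(\eta_J)\mid\mathcal R_J]-\phi(\zeta_J)=\mathbb E\bigl[(1-2\eta_J)^+\mid\mathcal R_J\bigr],
\]
using $\phi(t)-t=(1-2t)^+$. On the event $\Pr(\eta_J<\tfrac12\mid\mathcal R_J)>0$, the integrand $(1-2\eta_J)^+$ is strictly positive on a set of positive conditional probability. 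So the conditional gap is strictly positive.

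The case $\zeta_J<\tfrac12$ is symmetric. There $\phi(\zeta_J)=\mathbb E[1-\eta_J\mid\mathcal R_J]$, the gap is $\mathbb E[(2\eta_J-1)^+\mid\mathcal R_J]$, and we use $\Pr(\eta_J>\tfrac12\mid\mathcal R_J)>0$. In either case the crossing condition ensures that the side opposite to the coarse decision carries positive mass. Thus the conditional gap is a.s.\ positive on the crossing event. Since that event has positive probability by hypothesis, integrating gives $A^*_{\mathrm{con}}>A^*_{\mathrm{sol}}$.

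\textbf{Expected obstacles.} The main technical care lies in three places. First, measurability and regular conditional probabilities must be handled so that $\Pr(\eta_J<\tfrac12\mid\mathcal R_J)$ is well defined. Second, the "strictly positive integrand on positive mass implies positive conditional expectation" step must be stated a.s. Third, the random index $J$ should be handled by treating it as part of the joint law, so that $\mathcal R_J$ and $\mathcal U_J$ are random elements on the same probability space.
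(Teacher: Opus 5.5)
Your proposal is correct and follows essentially the same route as the paper. Both optima are identified as Bayes accuracies on $\mathcal R_J$ and $\mathcal U_J$ (the paper does this via Lemma~1 and the bijectivity of the consistency relabeling once $x^{ES}_J$ is observed). The weak inequality then follows from the tower property $\Pr(x^*_J=1\mid\mathcal R_J)=\mathbb E[\eta_J\mid\mathcal R_J]$ plus conditional Jensen for $t\mapsto|t-\tfrac12|$, which is your $\max(t,1-t)$ shifted by the constant $\tfrac12$. The only difference is in the strictness step: you make it explicit by computing the conditional Jensen gap as $\mathbb E[(1-2\eta_J)^+\mid\mathcal R_J]$ or $\mathbb E[(2\eta_J-1)^+\mid\mathcal R_J]$, depending on which side of $\tfrac12$ the coarse posterior lies, whereas the paper simply asserts that the absolute-value inequality is strict under posterior crossing.
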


\begin{proof}[Proof sketch]
Every solution predictor $h$ defines
$a_h(\mathcal{U}_J) = \mathbf{1}\{h(\mathcal{R}_J) = x^{ES}_J\}$, for which
$h_{a_h}(\mathcal{U}_J) = h(\mathcal{R}_J)$. Hence the augmented input can
reproduce every solution predictor. For strictness, conditional Jensen's
inequality \citep{kallenberg2021foundations} shows that refining the input
strictly increases Bayes accuracy under the stated condition.
\end{proof}

The inequality itself is unsurprising since additional input can never hurt a
Bayes-optimal predictor. The substantive content is about the strictness condition, which we call
\emph{posterior crossing}: for a non-negligible set of instance inputs,
different early assignments compatible with the same instance structure favor
different final values. In this regime, no predictor on instance features
alone, however expressive, can resolve the ambiguity, while the early
solution can. Figure~\ref{fig:early_solution_quality} provides
complementary empirical motivation by showing that early-to-final
discrepancies are often concentrated on a small fraction of variables.

\paragraph{A sparse-correction reading of the gain.}
Let $a^*$ be the Bayes rule on the augmented input,
$p = \Pr(x^{ES}_J \ne x^*_J)$ represents the early-solution error rate,
\(\rho=\Pr(a^*(\mathcal U_J)=0)\) denotes its flipping rate, and $q$ is the precision on flipped variables, where $q>1/2$ due to the Bayes optimality. Then we have
\begin{align*}
  1 - A^*_{\mathrm{con}} &= p - \rho(2q - 1),\\
  A^*_{\mathrm{con}} - A^*_{\mathrm{sol}}
    &= 1 - A^*_{\mathrm{sol}} - p + \rho(2q - 1).
\end{align*}
The gain is positive exactly when
$p < 1 - A^*_{\mathrm{sol}} + \rho(2q-1)$. This identity captures the residual
nature of EnCore: corrections with \(q>1/2\) remove more errors than they
introduce, and when early-to-final discrepancies are sparse, learning can
focus on a targeted subset of unstable assignments. The condition can also
hold when \(1-p<A^*_{\mathrm{sol}}\) (see
Appendix~\ref{app:theoretical_proofs}), showing that the early solution is
valuable not only as a feasible candidate but also as an informative
conditioning signal.

\paragraph{Finite-sample guarantee.}
Theorem~\ref{thm:info-gain} optimizes over all rules and thus characterizes
only the information available in principle. However, the predictor is
selected from finite data in practice, and one may worry whether the population
gain can survive estimation errors. Following the standard finite-class
empirical-risk-minimization analysis based on Bernstein's inequality and a
union bound \citep{boucheron2013concentration}, we fix a finite class
$\mathcal{A} = \{a_1, \dots, a_N\}$ of consistency rules before observing $m$
independent training instances, and let $\widehat{a}$ minimize the average
proportion of variable errors; only instances need be independent, matching
how MILP training data are collected. The class advantage decomposes as
$\Delta_{\mathcal{A}} = \Delta_{\mathrm{B}} - \alpha_{\mathcal{A}}$, where
$\Delta_{\mathrm{B}} := A^*_{\mathrm{con}} - A^*_{\mathrm{sol}}$ is the gain
from Theorem~\ref{thm:info-gain} and
$\alpha_{\mathcal{A}} := A^*_{\mathrm{con}} - \max_{a \in \mathcal{A}} \Acc(h_a)$
is the approximation gap, with $\Acc(h) = \Pr(h_J = x^*_J)$. Motivated by the
sparse discrepancies in Figure~\ref{fig:early_solution_quality}, we assume the
expected per-instance flip rate satisfies
\(\max_{a\in\mathcal{A}}\mathbb{E}[s_a]\le\bar{s}\).

\begin{theorem}
\label{thm:finite-sample}
Let $\widehat{h}_{ES} = h_{\widehat{a}}$, and let $\widehat{h}_{\mathrm{sol}}$
be any possibly data-dependent solution predictor. For every
$\delta \in (0,1)$, with probability at least $1 - \delta$,
\begin{equation*}
\begin{aligned}
  \Acc(\widehat{h}_{ES}) - \Acc(\widehat{h}_{\mathrm{sol}})
  \ge \Delta_{\mathrm{B}}
  - \alpha_{\mathcal{A}}
  - \varepsilon_m(\delta)&, \\
  \varepsilon_m(\delta)
  = 2\sqrt{\frac{2\bar{s}}{m}
      \ln \frac{2N}{\delta}}
    + \frac{8}{3m}
      \ln \frac{2N}{\delta}&.
\end{aligned}
\end{equation*}
\end{theorem}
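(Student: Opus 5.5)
The argument is a standard finite-class ERM bound, plus the observation that no solution predictor can beat $A^*_{\mathrm{sol}}$. First, for any (possibly data-dependent) $\widehat{h}_{\mathrm{sol}}$, we have $\Acc(\widehat{h}_{\mathrm{sol}})\le A^*_{\mathrm{sol}}$ pointwise in the data. Here $\Acc$ is evaluated on a fresh draw from $\mathcal{D}$, and $\widehat{h}_{\mathrm{sol}}$ is a fixed function of $\mathcal{R}_J$ once the data are fixed. It therefore suffices to show that with probability at least $1-\delta$,
\[
\Acc(h_{\widehat a})\ge \max_{a\in\mathcal A}\Acc(h_a)-\varepsilon_m(\delta).
\]
Indeed, $\max_{a}\Acc(h_a)=A^*_{\mathrm{con}}-\alpha_{\mathcal A}$, so subtracting $A^*_{\mathrm{sol}}$ yields $\Delta_{\mathrm B}-\alpha_{\mathcal A}-\varepsilon_m(\delta)$.

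For the ERM step, the per-instance loss of a rule $a$ is $L_a(G)=$ the fraction of variables of that instance with $h_a\neq x^*$, which lies in $[0,1]$. The training instances are i.i.d., so the $L_a(G_t)$, $t=1,\dots,m$, are i.i.d. for each fixed $a$; within-instance dependence is absorbed because we average per instance. Let $a^\circ$ be a maximizer of $\Acc(h_a)$ over $\mathcal A$. I will use the comparison
\[
\mathbb E L_{\widehat a}-\mathbb E L_{a^\circ}\le (\mathbb E L_{\widehat a}-\widehat L_{\widehat a})+(\widehat L_{a^\circ}-\mathbb E L_{a^\circ}),
\]
which holds since $\widehat L_{\widehat a}\le \widehat L_{a^\circ}$. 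Each deviation is controlled by Bernstein's inequality with a union bound over the $N$ rules and the two tails, which gives the $\ln(2N/\delta)$ factor. Each deviation is then at most $\sqrt{2\mathrm{Var}\,\ln(2N/\delta)/m}+\tfrac{b}{3m}\ln(2N/\delta)$, where $b$ bounds the range of the centered variable.

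The main obstacle is turning the variance into $\bar s$. This is where the flip-rate assumption enters: the loss of $h_a$ is $p$-like error plus flip effects, so I would rather bound the variance of the \emph{difference} or a suitably centered quantity. The approach I would try first is this. Write $L_a = E_{ES}+D_a$, where $E_{ES}$ is the per-instance early-solution error fraction, common to all $a$, and $D_a$ is the change caused by flips. Then $|D_a|\le s_a$, the per-instance flip rate. The common term $E_{ES}$ cancels in the comparison $\widehat L_{\widehat a}\le\widehat L_{a^\circ}$, so it suffices to apply Bernstein to $D_a$, whose values lie in $[-1,1]$. This gives
\[
\mathrm{Var}(D_a)\le\mathbb E[D_a^2]\le\mathbb E[s_a]\le\bar s,
\]
using $s_a\in[0,1]$, and a centered range of at most $2$ (hence the constant $8/3$ after summing the two deviations: $2\cdot\tfrac{2\cdot 2}{3}$ or similar bookkeeping). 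Summing the two sides yields exactly $2\sqrt{(2\bar s/m)\ln(2N/\delta)}+\tfrac{8}{3m}\ln(2N/\delta)$. I expect the constants to need some care, in particular whether the range is taken as $2$ with one-sided Bernstein. The structure, however, is fixed: cancel the common early-solution term, apply Bernstein to the flip-induced part with variance at most $\bar s$, take a union bound, and combine with the first reduction.
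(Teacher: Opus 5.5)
Your proposal is correct and follows essentially the same route as the paper: you use the always-retain rule as a common reference, so only the flip-induced differences $D_a$ enter; these satisfy $|D_a|\le s_a$, hence have variance at most $\bar s$ and centered range $2$. You then apply a two-sided Bernstein bound with a union bound over the $N$ rules, run the standard ERM comparison to pick up the factor $2$, and lower-bound the risk of any static-input predictor by its Bayes error $b_G$. Your own bookkeeping is inconsistent (you write $\tfrac{b}{3m}$ per deviation but then recover $\tfrac{8}{3m}$ in total), but this is harmless: the $\tfrac{b}{3m}\ln(2N/\delta)$ form you state is valid by the sub-gamma version of Bernstein and is sharper than the paper's relaxed $\tfrac{2c}{3m}\ln(2N/\delta)$, so the stated $\tfrac{8}{3m}$ term follows a fortiori.
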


\begin{proof}[Proof sketch]
Use the rule that always keeps the early solution as a common reference. A
candidate differs from it only on variables it flips, so its instance-level
loss difference has variance at most $\bar{s}$. Bernstein's inequality with a
union bound over the $N$ rules \citep{boucheron2013concentration} then
controls all empirical loss differences simultaneously. Empirical risk minimization converts this into the penalty $\varepsilon_m(\delta)$, while Bayes optimality lower-bounds the risk of every instance-input
solution predictor.
\end{proof}

\vspace{-0.5em}
Thus the learned consistency predictor provably outperforms \emph{any} solution
predictor, even one trained with unlimited data, whenever
$\Delta_{\mathrm{B}} > \alpha_{\mathcal{A}} + \varepsilon_m(\delta)$. The
dominant term of $\varepsilon_m(\delta)$ scales as $\sqrt{\bar{s}/m}$ rather
than $\sqrt{1/m}$: sparse correction rules are cheaper to select reliably. A
numerical illustration in Appendix~\ref{app:theory_example} shows that under representative
parameters, fewer than $200$ training instances suffice, which is similar to the typical sizes of training datasets.

In summary, Theorem~\ref{thm:info-gain} establishes that the early solution
provides a strict population-level information gain under a stated
posterior-crossing condition, and the sparse-correction identity explains
where the gain comes from: repairing a small, better-than-chance set of
unstable assignments. Theorem~\ref{thm:finite-sample} shows that this gain
persists under finite-sample model selection, with a sample cost discounted by
the very sparsity that motivates our design. Together, they justify shifting
the learning target from full solution prediction to early-to-final
consistency estimation.
\vspace{-5em}
\FloatBarrier
\section{Experiments}
\label{sec:experiments}

\subsection{Experimental Setup}
\begin{table*}[htbp]
    \centering
    \small
    \setlength{\tabcolsep}{1mm}
    \begin{tabular}{@{}lrrrrrrrr@{}}
        \toprule
        & \multicolumn{2}{c}{CA $\uparrow$ (BKS 98627.99)}
        & \multicolumn{2}{c}{SC $\downarrow$ (BKS 123.37)}
        & \multicolumn{2}{c}{WA $\downarrow$ (BKS 706.86)}
        & \multicolumn{2}{c}{IP $\downarrow$ (BKS 11.72)} \\
        \cmidrule(lr){2-3}\cmidrule(lr){4-5}
        \cmidrule(lr){6-7}\cmidrule(lr){8-9}
        Method & Obj & Gap & Obj & Gap & Obj & Gap & Obj & Gap \\
        \midrule
        Gurobi (3600s) & 98448.84 & 179.15 & 123.37 & 0.00 & 706.86 & 0.00 & 11.72 & 0.00 \\
        Gurobi (1000s) & 97311.69 & 1316.30 & 123.64 & 0.27 & 707.36 & 0.50 & 13.77 & 2.05 \\
        \midrule
        ND & 94340.63 & 4287.36 & 123.62 & 0.25 & 707.10 & 0.24 & 14.15 & 2.43 \\
        PS & 97906.20 & 721.79 & 123.60 & 0.23 & 707.09 & 0.23 & 12.08 & 0.36 \\
        Apollo & 98083.79 & 544.20 & 123.56 & 0.19 & 707.06 & 0.20 & 11.97 & 0.25 \\
        \midrule
        \ours{}-ND & 97847.92 & 780.70 & 123.58 & 0.21 & 707.03 & 0.17 & 13.47 & 1.75 \\
        \ours{}-PS & \textbf{98627.99} & \textbf{0.00} & \textbf{123.50} & \textbf{0.13}
        & \textbf{706.98} & \textbf{0.12} & 11.95 & 0.23 \\
        \ours{}-Apollo & 98491.40 & 136.59 & 123.57 & 0.20 & 707.03 & 0.17
        & \textbf{11.82} & \textbf{0.10} \\
        \midrule
        Best Gap Reduction & 721.79 & \textbf{100.0\%} & 0.10 & \textbf{43.5\%}
        & 0.11 & \textbf{47.8\%} & 0.15 & \textbf{60.0\%} \\
        \bottomrule
       
    \end{tabular}
     \caption{Main results with Gurobi. Obj is the final feasible objective and
    Gap is the absolute gap to the in-study BKS. Arrows indicate the preferred
    objective direction; bold marks the best learning-based result or a
    positive gap reduction.}
    \label{tab:gurobi_main_results}
\end{table*}

\begin{figure*}[ht]
    \centering
    \includegraphics[width=\textwidth]{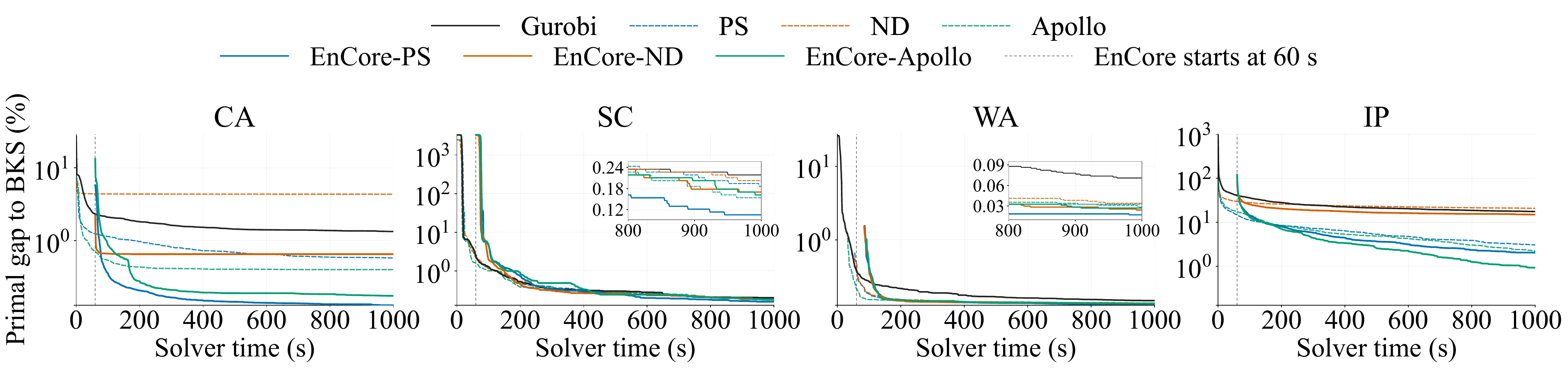}
    \caption{Average primal gap to the BKS versus time under a 1,000-second time limit. \ours{} starts after the early solution collection. Each curve is shown only after all test instances have obtained a feasible solution.}
    \label{fig:main_primal_gap}
\end{figure*}

\paragraph{Benchmarks.}
Following \citet{liu2025apollomilp}, we evaluate \ours{} on Combinatorial Auctions (CA)~\cite{leytonbrown2000auction}, Set Covering (SC)~\cite{balas1980setcovering}, Workload Apportionment (WA), and Item Placement (IP)~\cite{gasse2021ml4co}. These benchmarks cover MILPs with diverse scales and structures. For zero-shot transfer, we additionally evaluate on the eleven-instance MIPLIB IIS benchmark considered by \citet{liu2025apollomilp}, following the IIS setting introduced by \citet{wang2024digmilp}. The benchmark is a structurally related subset of MIPLIB centered on binary set-covering formulations, while exhibiting substantial variation in problem size, constraint structure, and sparsity. Dataset sources and instance statistics are reported in App.~\ref{app:experimental_details}.

\paragraph{Baselines and variants.}
We compare against the native MILP solver and three prediction-guided search pipelines: \nd{} (ND)~\cite{nair2020solving}, \ps{} (PS)~\cite{han2023gnn}, and Apollo-MILP (Apollo)~\cite{liu2025apollomilp}. 
For each learning-based pipeline, we replace its original solution predictor with our consistency predictor while keeping other configurations; the resulting methods are denoted as \ours{}-ND, \ours{}-PS, and \ours{}-Apollo. 
This design isolates the contribution of the proposed consistency prediction mechanism from downstream solver policies.

\paragraph{Evaluation protocol.}
All methods use a 1,000-second end-to-end budget that includes early-solution collection, inference, and downstream solving. For a fair comparison, \ours{} does not use the collected early solutions to warm-start the final search. For \ours{}-Apollo, early solutions are collected once and reused across correction rounds. We report the final feasible objective, its absolute gap to the best-known solution (BKS), and the gap reduction relative to the corresponding solution predictor baseline. The BKS is the best mean final objective among all evaluated methods, including the 3,600-second Gurobi reference.

\subsection{Main Results}

Table~\ref{tab:gurobi_main_results} reports the final objectives under Gurobi. Substituting EnCore for the original solution predictor reduces the final gap in 11 of the 12 settings (4 benchmarks 
$\times$ 3 search mechanisms), by 38.7\%, 56.9\%, and 36.2\% on average for ND, PS, and Apollo, respectively. The sole exception is Apollo on SC, where the gap grows marginally from 0.19 to 0.20. On CA, \ours{}-PS finds the best solution among all compared methods, improving on the 3,600-second Gurobi incumbent by 179.15 within a 1,000-second budget, and \ours{}-Apollo also exceeds this reference (+42.56). On SC, WA, and IP, where the baselines have already been close to the BKS, \ours{} still removes 43.5\%, 47.8\%, and 60.0\% of the best baseline gap. Remarkably, \ours{} remains effective even under weak easy-to-final consistency. On CA, where 19\% of the variables flip between the early and final solutions, the consistency predictor still captures the learnable patterns underlying these flips, allowing \ours{} to achieve a 100\% average gap reduction.

Figure~\ref{fig:main_primal_gap} traces the primal gap over time.
Although the best \ours{} variant on each benchmark begins prediction-guided
search only after the 60-second collection stage, it catches up with its
corresponding baseline at 71, 241, 164, and 93~seconds of end-to-end runtime
on CA, SC, WA, and IP, respectively, and remains ahead thereafter.
\ours{}-PS overtakes all competitors early on CA and attains the lowest gap
on SC and WA, with the late-stage separation shown in the insets. On IP,
\ours{}-Apollo exhibits the strongest late-stage anytime performance and
widens its lead over time.

\paragraph{Zero-shot cross-solver transfer to SCIP.}
We directly apply the Gurobi-trained checkpoint to SCIP-generated trajectories without retraining or model selection. As shown in Table~\ref{tab:scip_results}, \ours{}-ND consistently improves all four benchmarks, reducing the absolute gap by 53.6\%, 22.0\%, 33.1\%, and 50.7\% on CA, SC, WA, and IP, respectively. Moreover, \ours{}-PS also improves performance on CA, WA, and IP. These results show that the learned consistency representation remains informative under a solver shift.
\begin{table}[ht]
\centering
\small
\setlength{\tabcolsep}{2mm}
\begin{tabular}{@{}lrrrr@{}}
\toprule
Method
& CA $\uparrow$
& SC $\downarrow$
& WA $\downarrow$
& IP $\downarrow$ \\
\midrule
SCIP(1000s)
& 94701.91
& 127.99
& 709.06
& 23.49 \\
\midrule
ND
& 94342.12
& 125.14
& 707.33
& 18.19 \\
PS
& 97097.46
& 125.03
& 708.49
& 17.02 \\
Apollo
& 97335.92
& 124.97
& 708.43
& 16.21 \\
\midrule
\ours{}-ND
& 96641.43
& \textbf{124.75}
& \textbf{707.24}
& \textbf{14.91} \\
\ours{}-PS
& \textbf{97707.64}
& 125.04
& 708.22
& 16.13 \\
\ours{}-Apollo
& 97293.51
& 125.38
& 707.91
& 17.39 \\
\midrule
Best Gap Reduction
& \textbf{53.6\%}
& \textbf{22.0\%}
& \textbf{33.1\%}
& \textbf{50.7\%} \\
\bottomrule
\end{tabular}
\caption{Zero-shot transfer from Gurobi to SCIP under a 1,000-second total
budget. 
Bold marks the best result. 
The BKS is presented in Table~\ref{tab:gurobi_main_results}.}
\label{tab:scip_results}
\end{table}

\paragraph{Zero-shot cross-family transfer to MIPLIB IIS.}
We train our EnCore models on the mixed SC, CA, WA, and IP instances, and evaluate them on the eleven unseen MIPLIB instances used by \citet{liu2025apollomilp} under the same 1,000-second budget. The clearest advantage of EnCore is about feasibility. With ND, the original predictor finds feasible solutions on only 3 of 11 instances, while \ours{}-ND succeeds on all 11. With PS and Apollo, where the baselines already achieve full feasibility, \ours{} still yields consistent improvements in mean objectives. These results demonstrate that our proposed EnCore exhibits greater reliability and robustness than existing methods when faced with substantial distribution shifts. Details are provided in Appendix~\ref{app:experimental_details}.

\subsection{Ablation Study}

Table~\ref{tab:ablation_study} presents a cumulative ablation of the Predict-and-Search pipeline. Adding early solutions as input features alone improves most benchmarks. However, the consistency target contributes substantially larger gains than the extra features themselves, indicating that reformulating the prediction objective is empirically effective. Finally, introducing early-solution ensembling further improves CA, SC, and WA, delivering a smaller but complementary robustness benefit.

\begin{table}[ht]
    \centering
    \small
    \setlength{\tabcolsep}{2.3pt}
    \begin{tabular}{@{}lrrrr@{}}
        \toprule
        Variant & CA $\uparrow$ & SC $\downarrow$ & WA $\downarrow$ & IP $\downarrow$ \\
        \midrule
        \ps & 97906.20 & 123.60 & 707.09 & 12.08 \\
        + early solution as feature & 98318.22 & 123.55 & 707.04 & 12.17 \\
        + consistency prediction target & 98616.65 & 123.53
        & 706.99 & \textbf{11.90} \\
        + early-solution ensemble & \textbf{98627.99} & \textbf{123.50}
        & \textbf{706.98} & 11.95 \\
        \bottomrule
    \end{tabular}
    \caption{Ablation study of key components under the Predict-and-Search pipeline. Average objective values are reported.}
    \label{tab:ablation_study}
\end{table}

\vspace{-0.5em}
\paragraph{Trade-off Analysis of Early-Solution Collection}
\label{sec:trade-off}
Figure~\ref{fig:collection_time_ablation} varies the collection budget
$T_{\max}$ under the fixed 1,000-second total budget. Across all benchmarks,
performance peaks at a small $T_{\max}$: a short collection stage suffices to obtain informative early
solutions, whereas larger $T_{\max}$ leads to diminishing improvements in early-solution quality while consuming more time budget for downstream search.
\begin{figure}[ht]
    \centering
    \includegraphics[width=0.85\columnwidth]{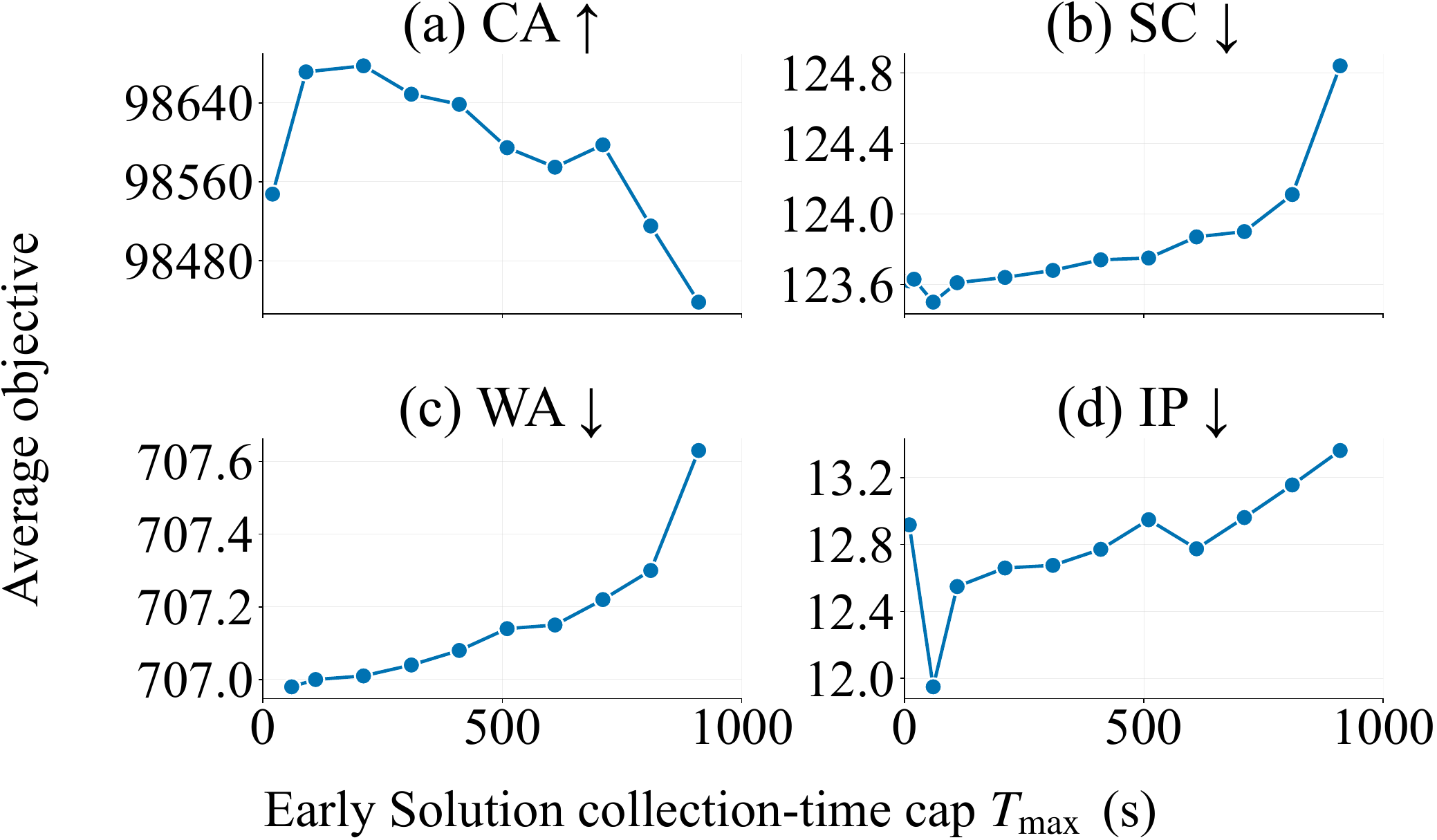}
    \caption{Average final objective of \ours{}-PS under different maximum early-solution collection times.}
    \label{fig:collection_time_ablation}
\end{figure}

\section{Conclusion}

In this paper, we presented EnCore, a solver-informed paradigm that accelerates MILP solving via early-to-final solution consistency prediction. Instead of constructing complete solutions from the static MILP formulation alone, EnCore conditions on feasible solutions collected during the early search stage and identifies the assignments likely to persist under a full solving budget. The resulting consistency scores integrate seamlessly with diverse prediction-guided search frameworks, and ensembling multiple early solutions further enhances robustness. Theoretically, we showed that conditioning on early solutions yields a strict gain in achievable prediction accuracy, and that this gain survives finite-sample model selection at a cost discounted by correction sparsity. Empirically, EnCore consistently improves prediction-guided solving across four benchmarks and generalizes zero-shot to unseen solvers and problem families.
Future work will explore jointly learning consistency prediction and downstream search policies.
\newpage
\bibliography{aaai2027}

% Check whether the conference requires a reproducibility checklist to be included in the paper.
% If so, you can uncomment the following line and ajust the path to include it.
% \input{ReproducibilityChecklist.tex}
% \newpage
\appendix
\setcounter{secnumdepth}{1} %May be changed to 1 or 2 if section numbers are desired.
\section{Solver Behavior}
\label{app:solver_behavior}

Figures~\ref{fig:gurobi_solver_behavior}
and~\ref{fig:scip_solver_behavior} show the primal gap versus solving time
for Gurobi and SCIP on the four benchmarks. Figures~\ref{fig:gurobi_early_solution_all}
and~\ref{fig:scip_early_solution_all} show the distributions of integer
variables flipped between the early and full-budget solutions.
Figure~\ref{fig:miplib_early_solution_all} reports the same analyses on MIPLIB.

\begingroup
\setlength{\floatsep}{5pt plus 1pt minus 1pt}
\setlength{\textfloatsep}{5pt plus 1pt minus 1pt}
\setlength{\intextsep}{5pt plus 1pt minus 1pt}
\captionsetup[figure]{skip=2pt}
\captionsetup[subfigure]{skip=0pt}
\setcounter{topnumber}{5}
\setcounter{bottomnumber}{5}
\setcounter{totalnumber}{10}
\renewcommand{\topfraction}{0.99}
\renewcommand{\bottomfraction}{0.99}
\renewcommand{\textfraction}{0.01}
\newcommand{\solverpanelwidth}{0.42}

\begin{figure}[!ht]
    \centering
    \begin{subfigure}[t]{\solverpanelwidth\columnwidth}
        \centering
        \includegraphics[width=\linewidth]{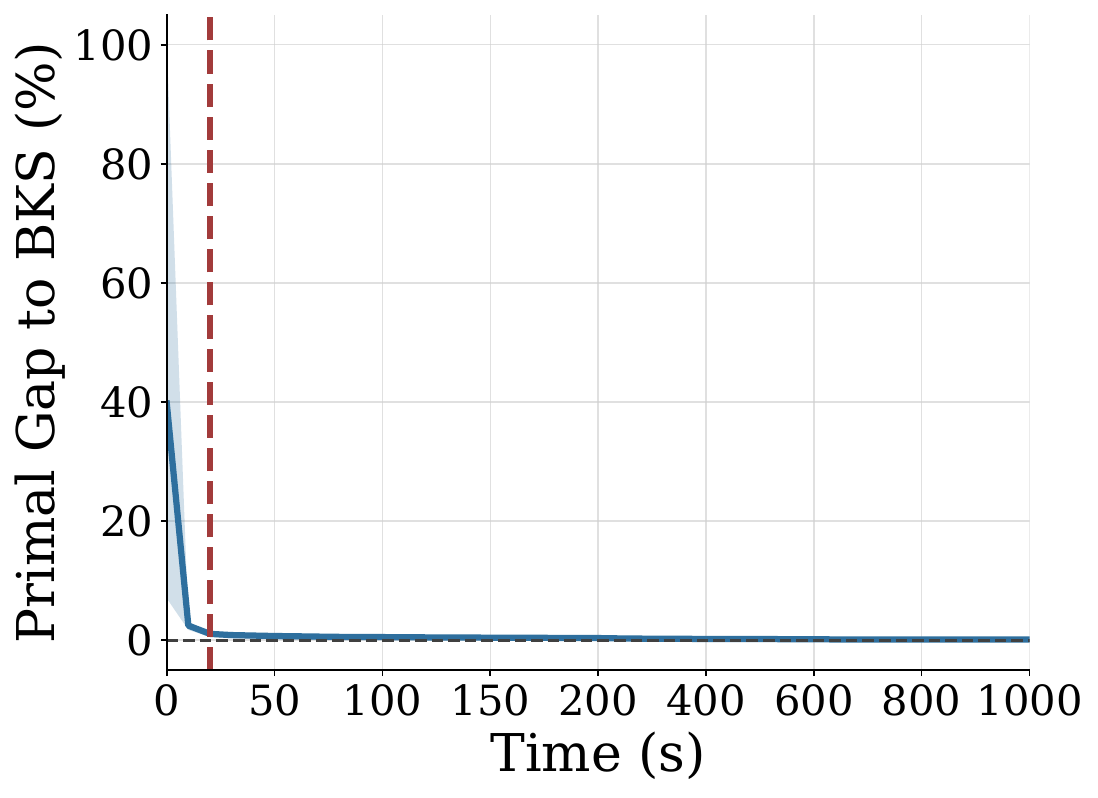}
        \caption{CA}
        \label{fig:gurobi_behavior_ca}
    \end{subfigure}
    \hfill
    \begin{subfigure}[t]{\solverpanelwidth\columnwidth}
        \centering
        \includegraphics[width=\linewidth]{Figures/gap_info/SC_test_gurobi/primal_gap_vs_time.pdf}
        \caption{SC}
        \label{fig:gurobi_behavior_sc}
    \end{subfigure}

    \begin{subfigure}[t]{\solverpanelwidth\columnwidth}
        \centering
        \includegraphics[width=\linewidth]{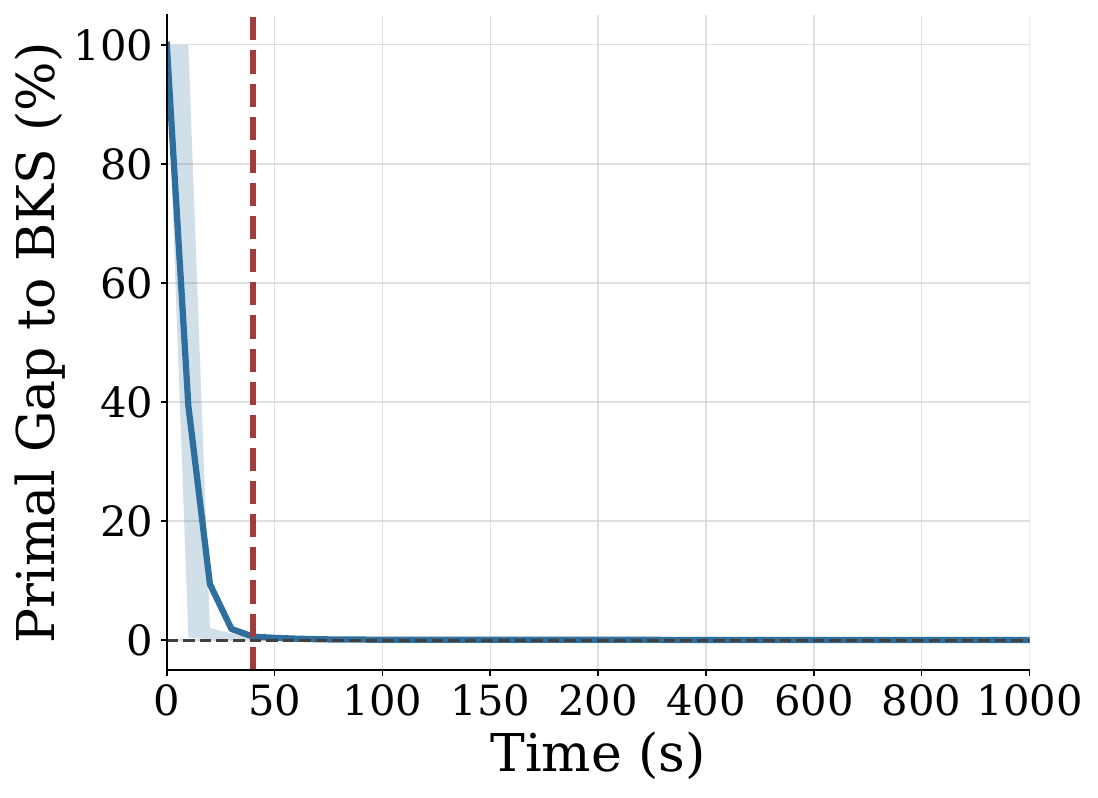}
        \caption{WA}
        \label{fig:gurobi_behavior_wa}
    \end{subfigure}
    \hfill
    \begin{subfigure}[t]{\solverpanelwidth\columnwidth}
        \centering
        \includegraphics[width=\linewidth]{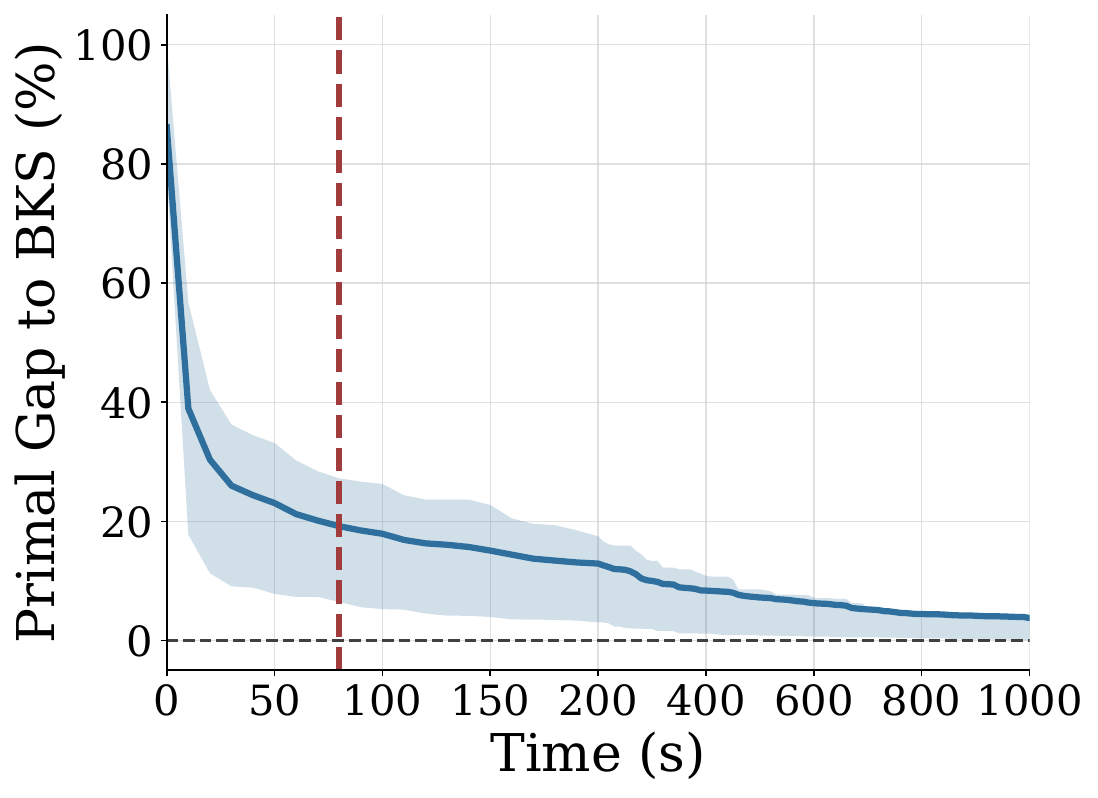}
        \caption{IP}
        \label{fig:gurobi_behavior_ip}
    \end{subfigure}
    \caption{Primal gap versus solving time for Gurobi.}
    \label{fig:gurobi_solver_behavior}
\end{figure}

\begin{figure}[!ht]
    \centering
    \begin{subfigure}[t]{\solverpanelwidth\columnwidth}
        \centering
        \includegraphics[width=\linewidth]{Figures/early_solution_quality/CA_test_flip_count_vs_best.pdf}
        \caption{CA}
    \end{subfigure}
    \hfill
    \begin{subfigure}[t]{\solverpanelwidth\columnwidth}
        \centering
        \includegraphics[width=\linewidth]{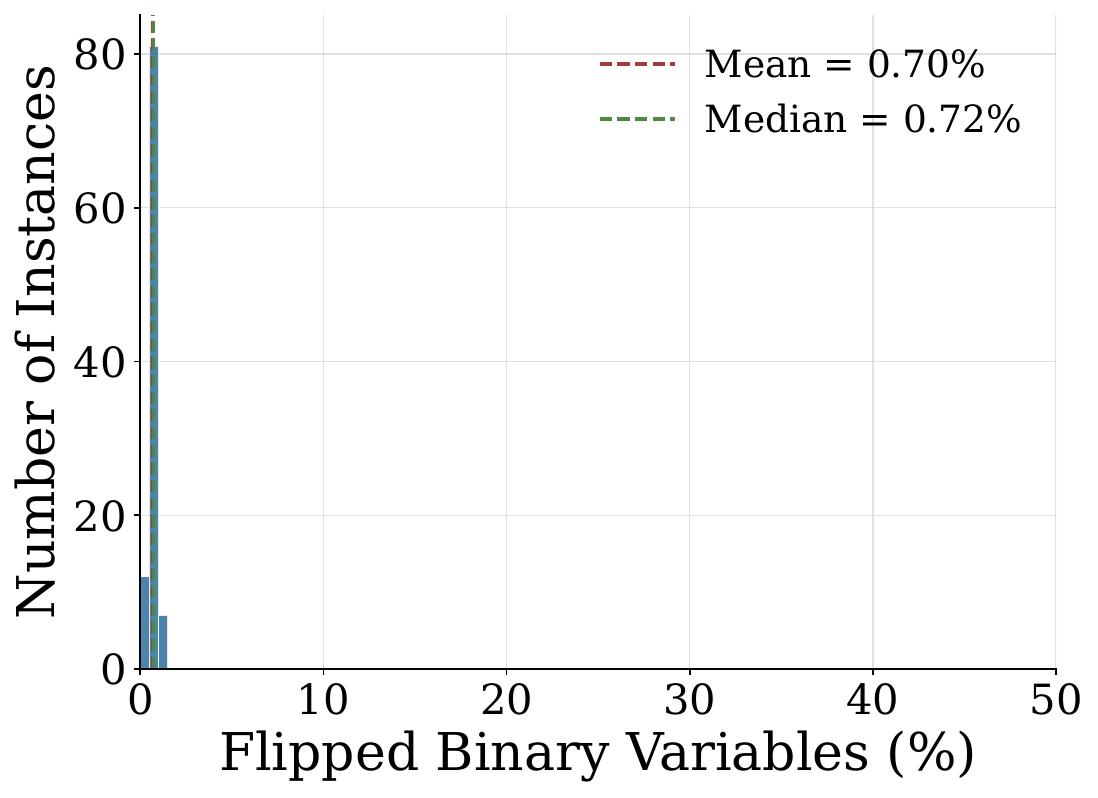}
        \caption{SC}
    \end{subfigure}

    \begin{subfigure}[t]{\solverpanelwidth\columnwidth}
        \centering
        \includegraphics[width=\linewidth]{Figures/early_solution_quality/WA_test_flip_count_vs_best.pdf}
        \caption{WA}
    \end{subfigure}
    \hfill
    \begin{subfigure}[t]{\solverpanelwidth\columnwidth}
        \centering
        \includegraphics[width=\linewidth]{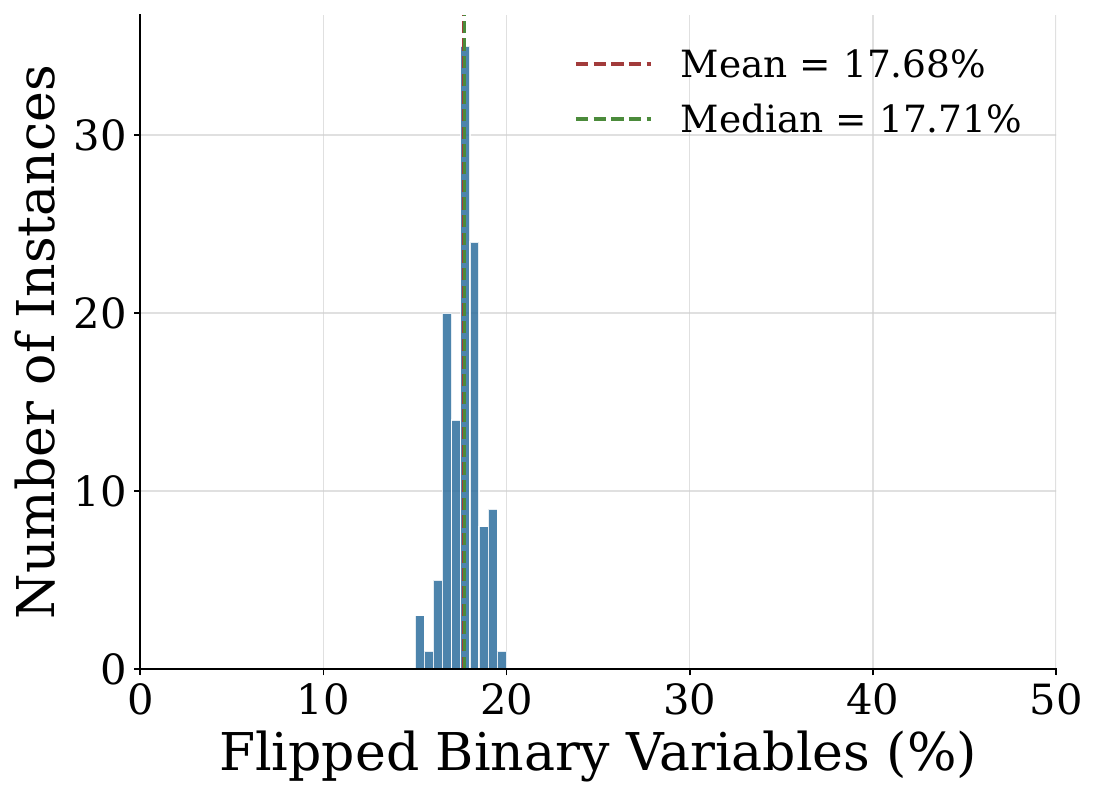}
        \caption{IP}
    \end{subfigure}
    \caption{Early-to-final flip distributions for Gurobi; red and green dashed lines denote the mean and median.}
    \label{fig:gurobi_early_solution_all}
\end{figure}

\begin{figure}[!ht]
    \centering
    \begin{subfigure}[t]{\solverpanelwidth\columnwidth}
        \centering
        \includegraphics[width=\linewidth]{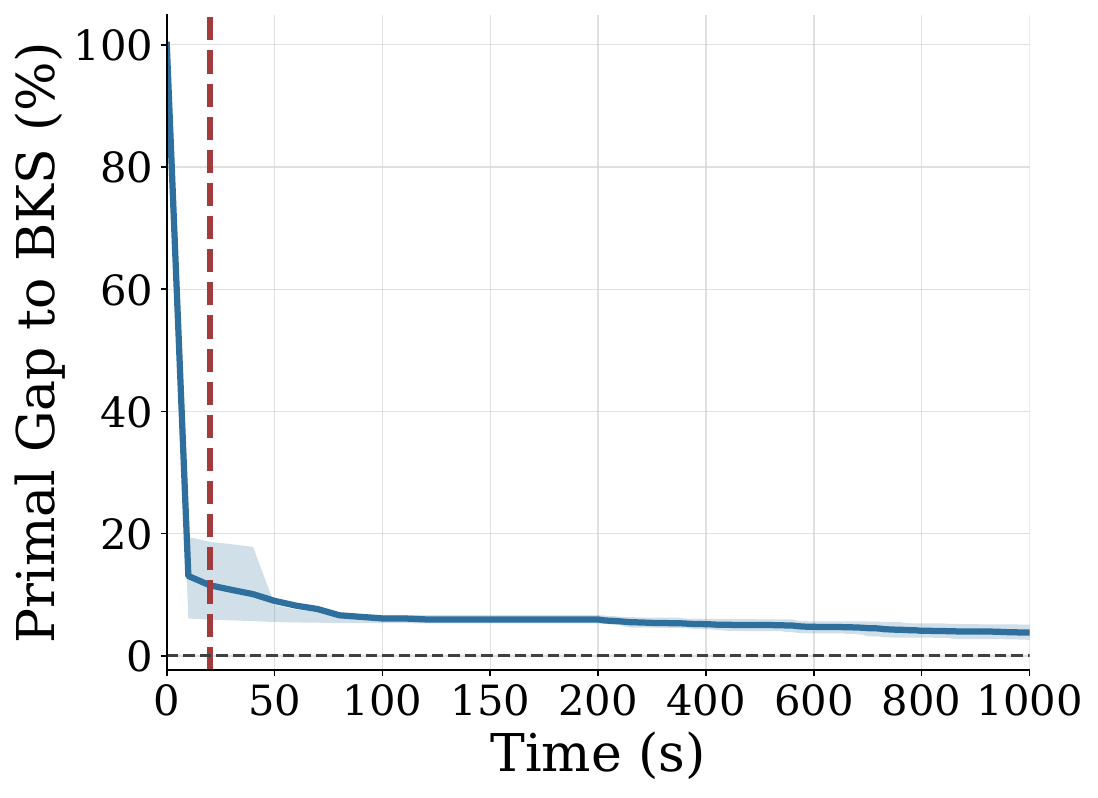}
        \caption{CA}
        \label{fig:scip_behavior_ca}
    \end{subfigure}
    \hfill
    \begin{subfigure}[t]{\solverpanelwidth\columnwidth}
        \centering
        \includegraphics[width=\linewidth]{Figures/gap_info/SC_test_scip_baseline/primal_gap_vs_time.pdf}
        \caption{SC}
        \label{fig:scip_behavior_sc}
    \end{subfigure}

    \begin{subfigure}[t]{\solverpanelwidth\columnwidth}
        \centering
        \includegraphics[width=\linewidth]{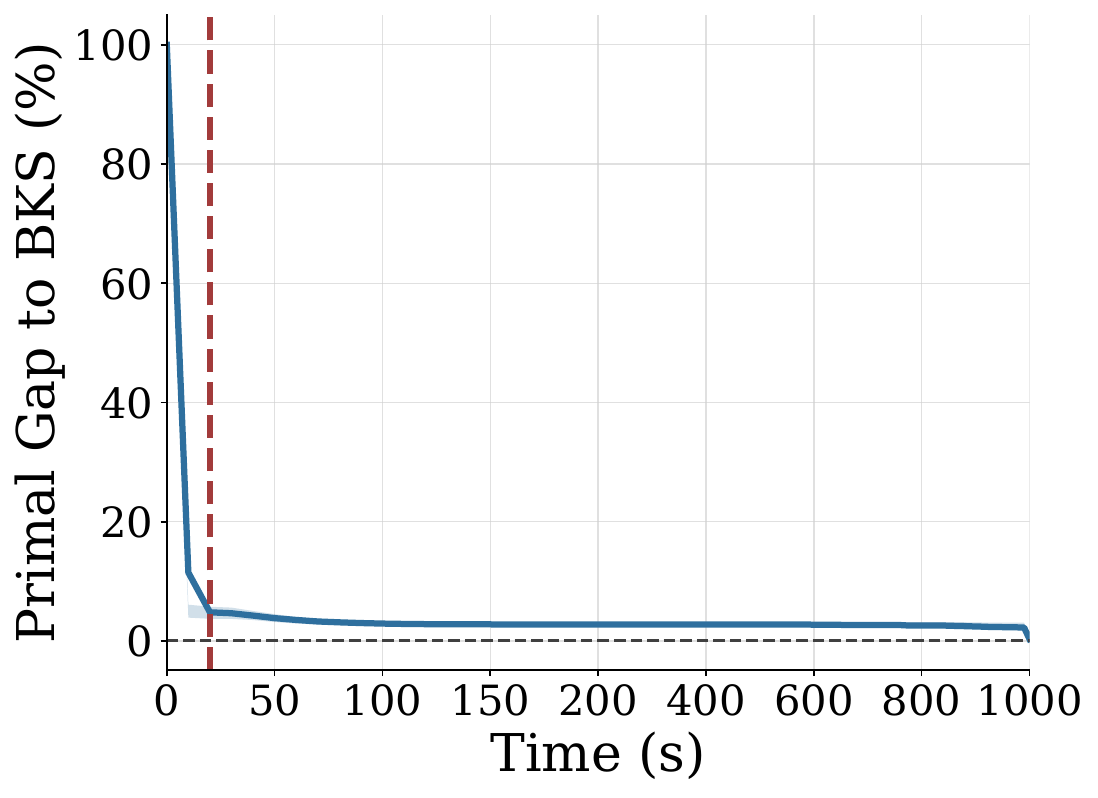}
        \caption{WA}
        \label{fig:scip_behavior_wa}
    \end{subfigure}
    \hfill
    \begin{subfigure}[t]{\solverpanelwidth\columnwidth}
        \centering
        \includegraphics[width=\linewidth]{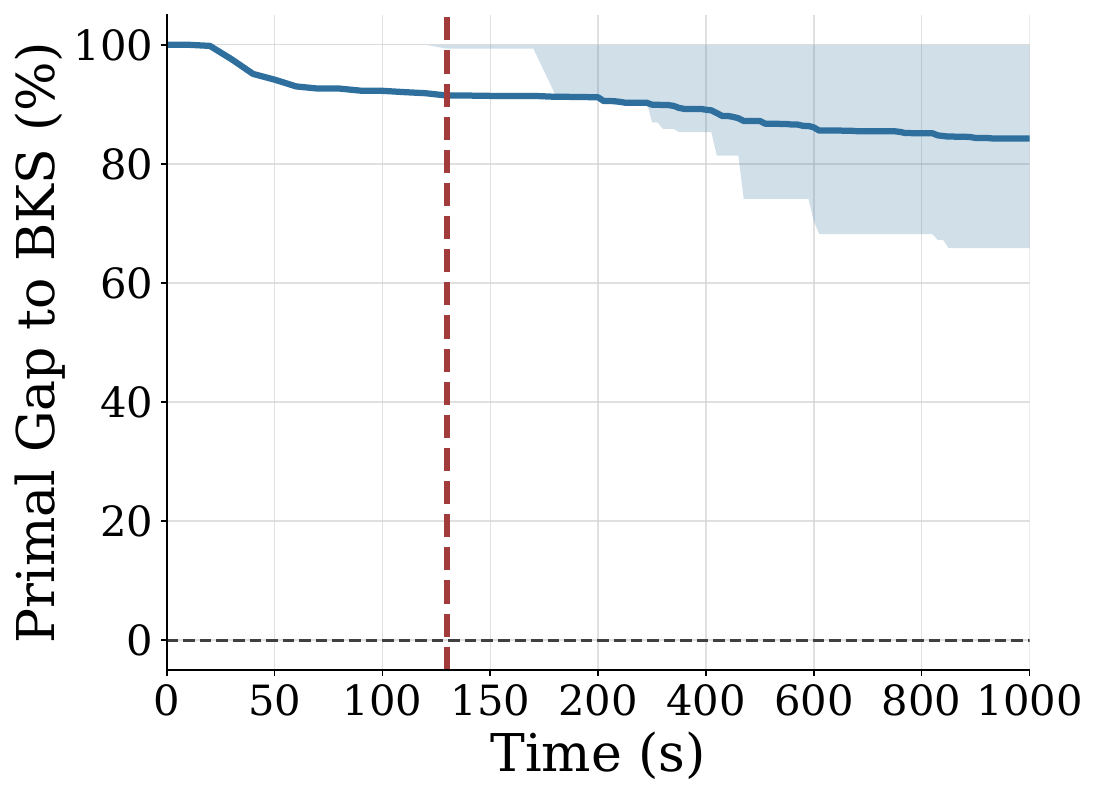}
        \caption{IP}
        \label{fig:scip_behavior_ip}
    \end{subfigure}
    \caption{Primal gap versus solving time for SCIP.}
    \label{fig:scip_solver_behavior}
\end{figure}

\begin{figure}[!ht]
    \centering
    \begin{subfigure}[t]{\solverpanelwidth\columnwidth}
        \centering
        \includegraphics[width=\linewidth]{Figures/early_solution_scip_quality/CA_scip_test_flip_count_vs_best.pdf}
        \caption{CA}
    \end{subfigure}
    \hfill
    \begin{subfigure}[t]{\solverpanelwidth\columnwidth}
        \centering
        \includegraphics[width=\linewidth]{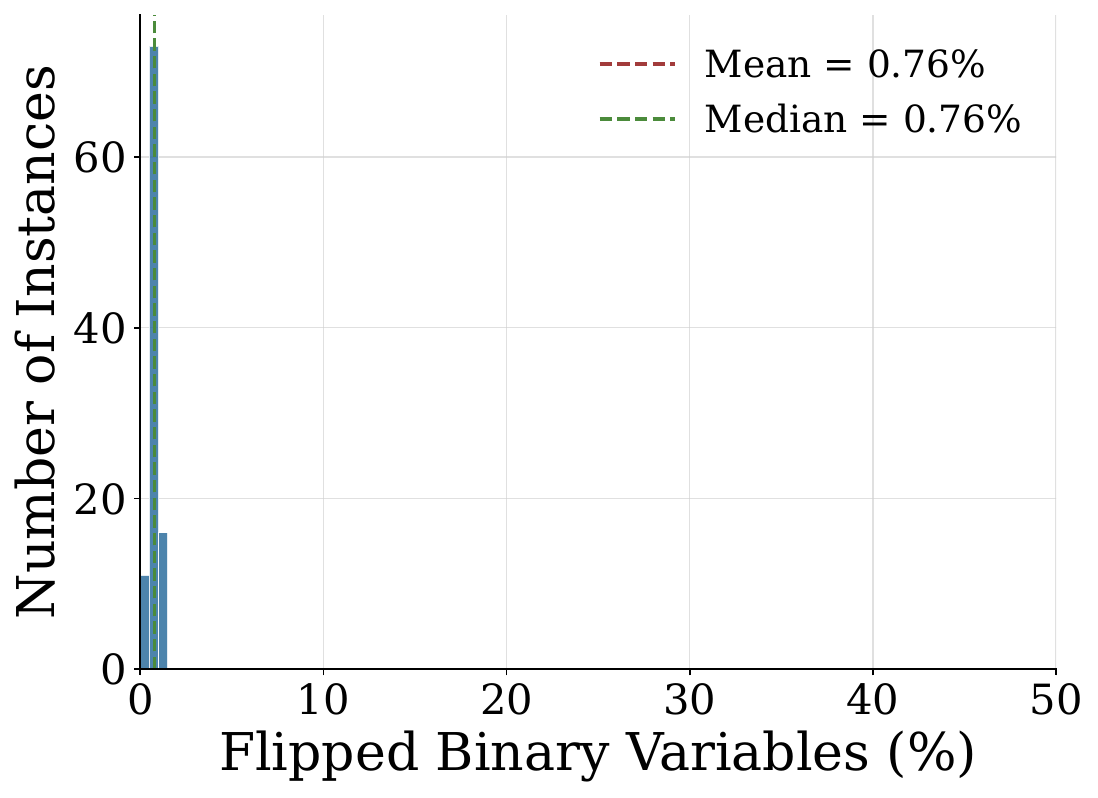}
        \caption{SC}
    \end{subfigure}

    \begin{subfigure}[t]{\solverpanelwidth\columnwidth}
        \centering
        \includegraphics[width=\linewidth]{Figures/early_solution_scip_quality/WA_scip_test_flip_count_vs_best.pdf}
        \caption{WA}
    \end{subfigure}
    \hfill
    \begin{subfigure}[t]{\solverpanelwidth\columnwidth}
        \centering
        \includegraphics[width=\linewidth]{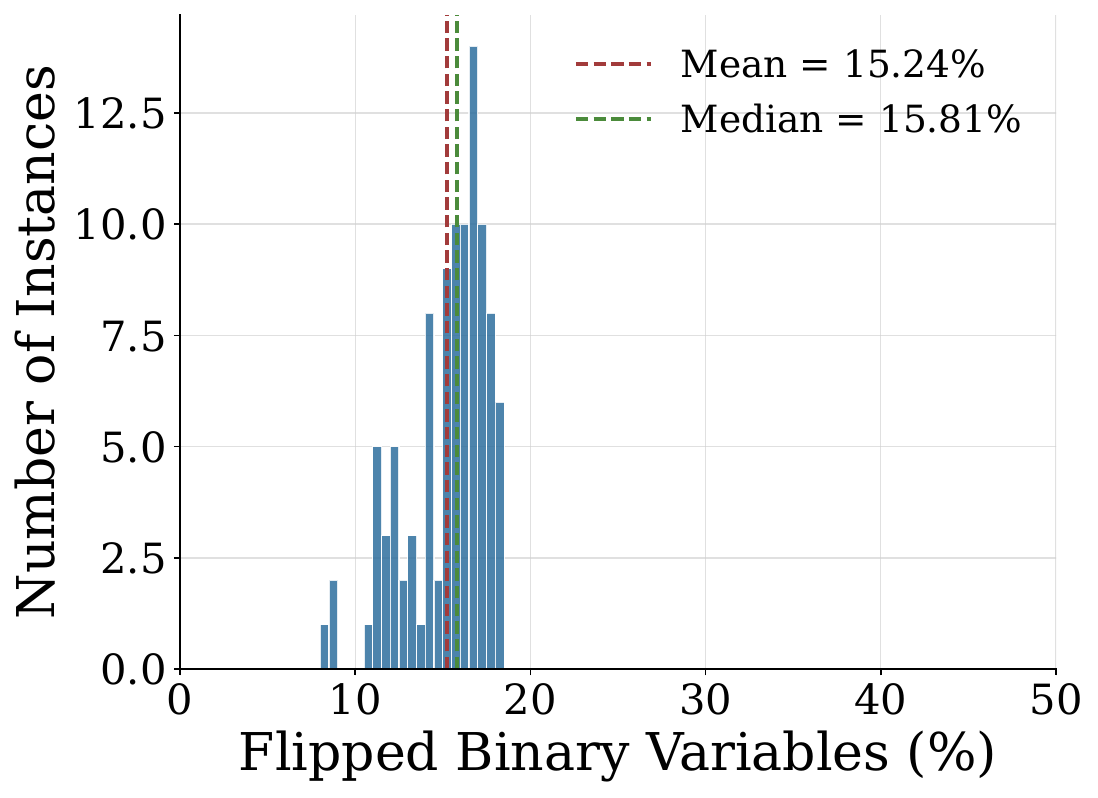}
        \caption{IP}
    \end{subfigure}
    \caption{Early-to-final flip distributions for SCIP; red and green dashed lines denote the mean and median.}
    \label{fig:scip_early_solution_all}
\end{figure}

\begin{figure}[!ht]
    \centering
    \begin{subfigure}[t]{0.40\columnwidth}
        \centering
        \includegraphics[width=\linewidth]{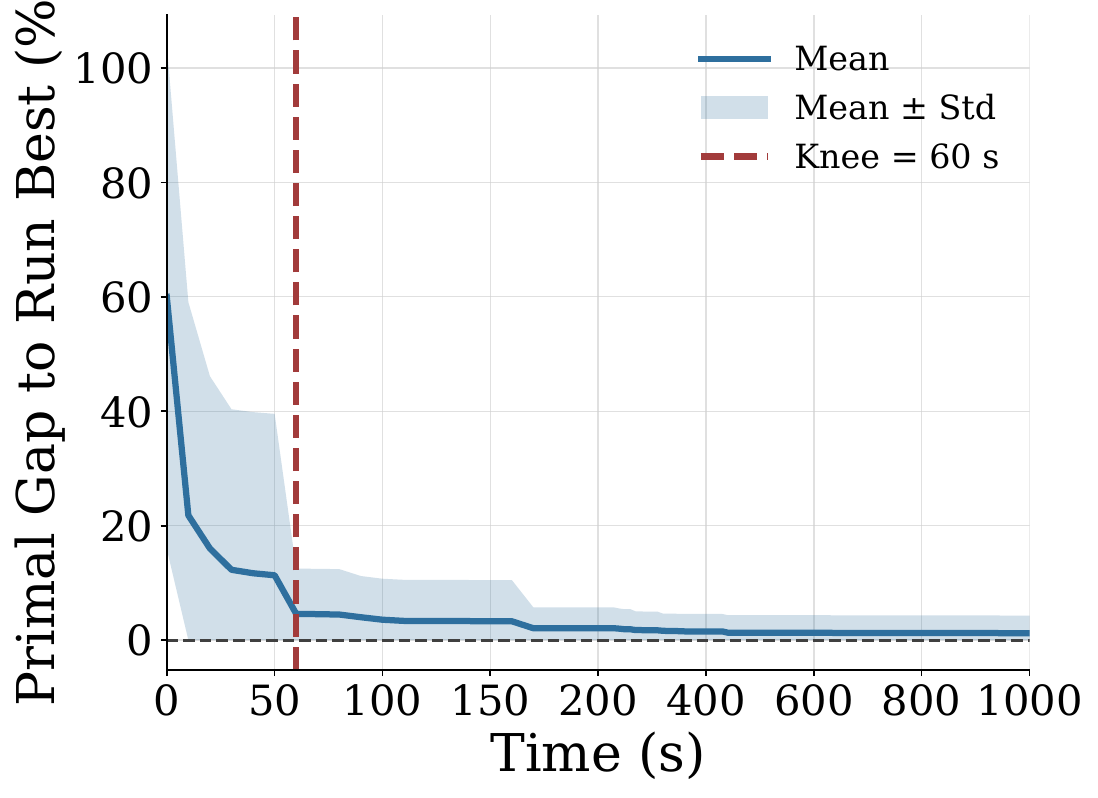}
        \caption{Primal gap over time.}
    \end{subfigure}
    \hfill
    \begin{subfigure}[t]{0.40\columnwidth}
        \centering
        \includegraphics[width=\linewidth]{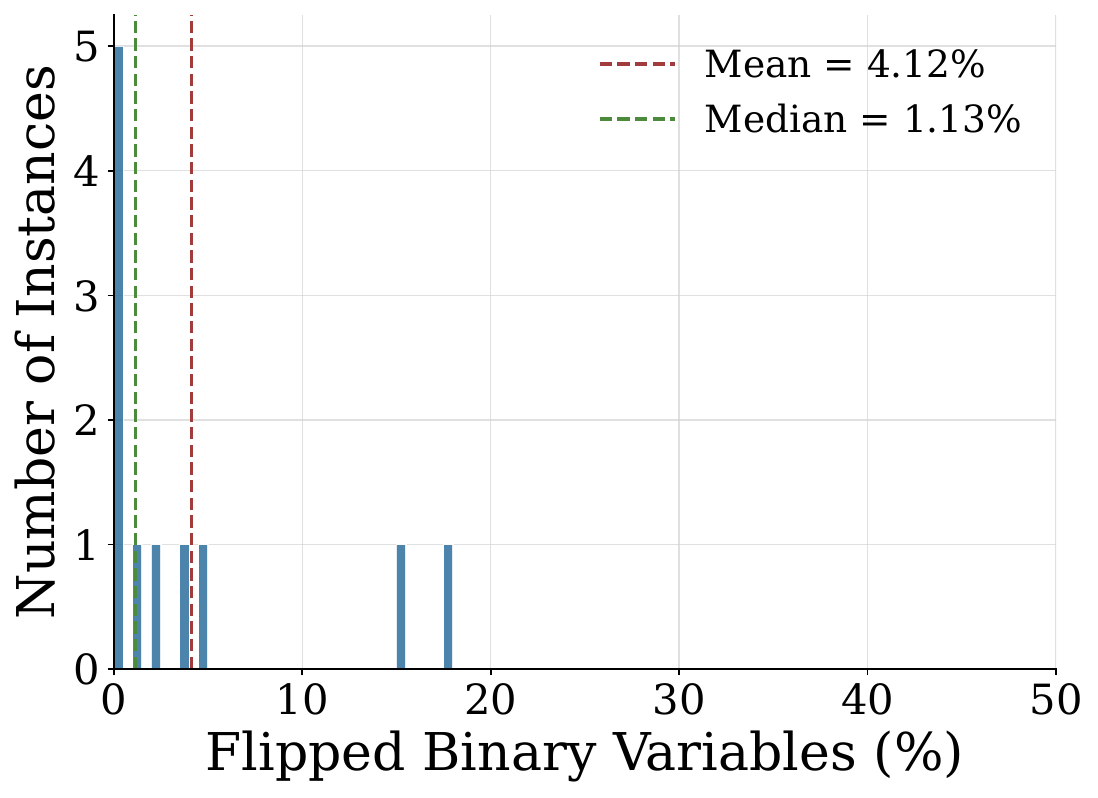}
        \caption{Aggregate flip distribution.}
    \end{subfigure}

    \begin{subfigure}[t]{0.80\columnwidth}
        \centering
        \includegraphics[width=\linewidth]{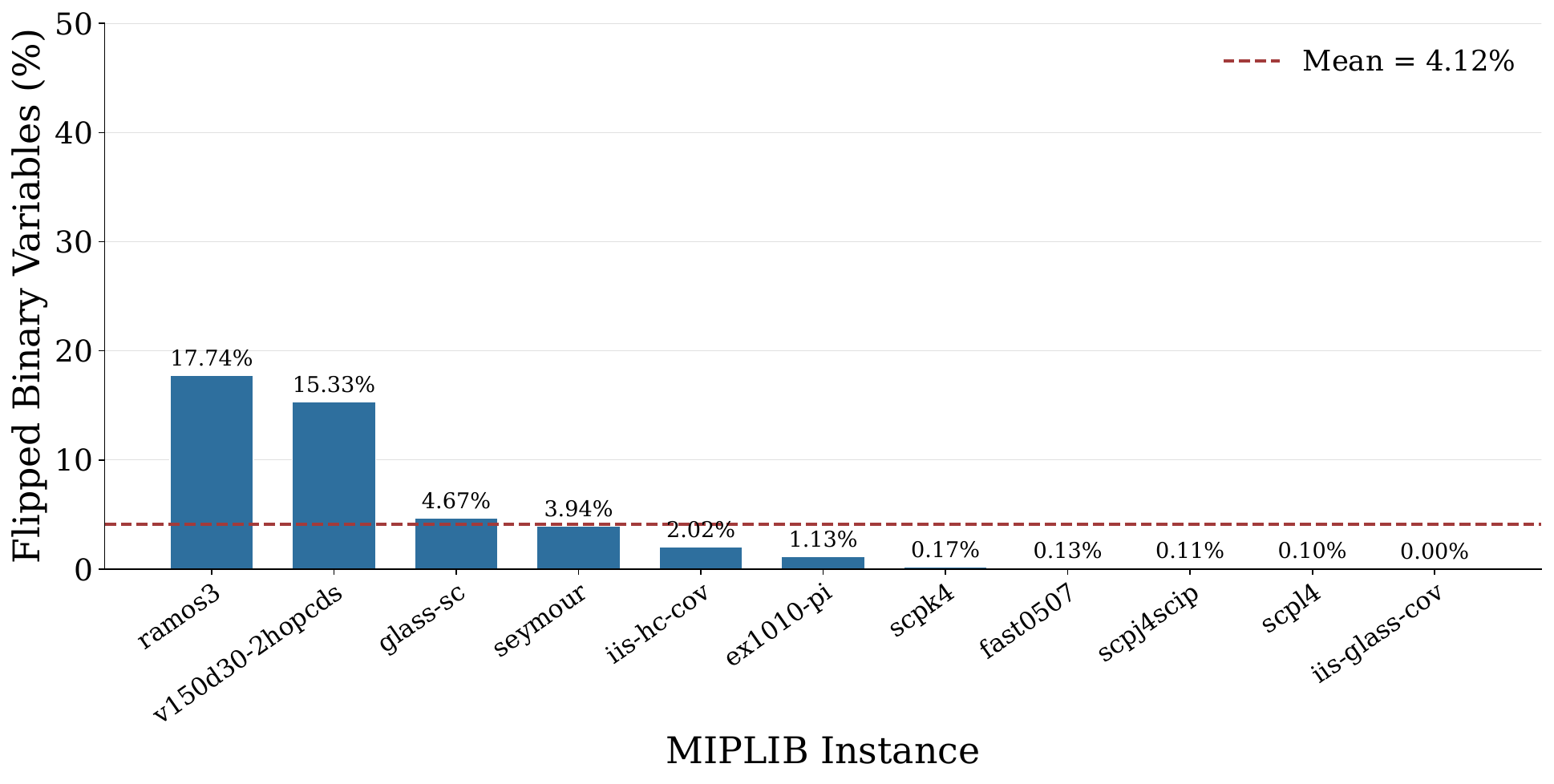}
        \caption{Per-instance flip percentages.}
    \end{subfigure}
    \caption{Solver behavior on the MIPLIB IIS dataset with Gurobi.}
    \label{fig:miplib_early_solution_all}
\end{figure}

\FloatBarrier
\endgroup
\section{Model Structure}
\label{app:model_structure}

\paragraph{Graph Representation.}
We follow the MILP representation of \citet{han2023gnn}, adding only one
dimension to each variable node to encode its early-solution value.
Detailed feature definitions are provided in
Table~\ref{tab:model_input_features}.
\begin{table}[t]
    \centering
    \footnotesize
    \setlength{\tabcolsep}{2.5pt}
    \renewcommand{\arraystretch}{1.05}

    \begin{tabularx}{\columnwidth}{
        @{}
        >{\centering\arraybackslash}p{0.09\columnwidth}
        >{\raggedright\arraybackslash}p{0.29\columnwidth}
        >{\raggedright\arraybackslash}X
        @{}
    }
        \toprule
        \textbf{Index}
        & \textbf{Feature}
        & \textbf{Description} \\
        \midrule

        \multicolumn{3}{@{}l}{\textbf{Variable-node features}} \\
        \addlinespace[1pt]

        0
        & Objective
        & Normalized objective coefficient. \\

        1
        & Variable coefficient
        & Average variable coefficient across all constraints. \\

        2
        & Variable degree
        & Degree of the variable node in the bipartite graph. \\

        3
        & Maximum coefficient
        & Maximum variable coefficient across all constraints. \\

        4
        & Minimum coefficient
        & Minimum variable coefficient across all constraints. \\

        5
        & Variable type
        & Indicator of whether the variable is integer. \\

        6--17
        & Position embedding
        & Binary encoding of the variable's order among all variables. \\

        18
        & Early-solution value
        & Value of the variable in the early solution collected as
        described in Appendix~\ref{app:es_collect}. \\

        \midrule

        \multicolumn{3}{@{}l}{\textbf{Constraint-node features}} \\
        \addlinespace[1pt]

        0
        & Constraint coefficient
        & Average of the nonzero coefficients in the constraint. \\

        1
        & Constraint degree
        & Degree of the constraint node in the bipartite graph. \\

        2
        & Bias
        & Normalized right-hand side of the constraint. \\

        3
        & Sense
        & Sense of the constraint. \\

        \midrule

        \multicolumn{3}{@{}l}{\textbf{Edge features}} \\
        \addlinespace[1pt]

        0
        & Coefficient
        & Coefficient connecting the constraint and variable nodes. \\

        \bottomrule
    \end{tabularx}

    \caption{Graph features for input.}
    \label{tab:model_input_features}
\end{table}

\paragraph{Network architecture.}

We strictly follow the network architecture proposed by
\citet{han2023gnn}, modifying only the variable encoder to accommodate
the additional early-solution feature. Each variable node is represented
by 19 input features, consisting of the original 18 features and the
early-solution value introduced by our method. Constraint nodes and edges
have 4 and 1 input features, respectively. All node representations are
embedded into a 64-dimensional latent space.

For an input feature vector of dimension $d$, the node encoder is defined as
\[
\operatorname{Enc}_{d}(x)
=
\operatorname{ReLU}\!\left(
W_{2}\,
\operatorname{ReLU}\!\left(
W_{1}\operatorname{LN}(x)
\right)
\right),
\]
where $W_{1}\colon\mathbb{R}^{d}\rightarrow\mathbb{R}^{64}$ and
$W_{2}\colon\mathbb{R}^{64}\rightarrow\mathbb{R}^{64}$.
Accordingly, the variable and constraint encoders use
$\operatorname{Enc}_{19}$ and $\operatorname{Enc}_{4}$, respectively,
while the scalar edge feature is normalized by
$\operatorname{LayerNorm}(1)$.

The model performs two rounds of bidirectional message passing between
variable and constraint nodes:
\[
V\rightarrow C\rightarrow V\rightarrow C\rightarrow V.
\]
For a directed edge $j\rightarrow i$, the message is computed as
\[
m_{ij}
=
W_{M}\,
\operatorname{ReLU}\!\left[
\operatorname{LN}\!\left(
W_{L}h_i+W_{E}e_{ij}+W_{R}h_j
\right)
\right],
\]
where $h_i$ and $h_j$ denote the target and source node representations,
and $e_{ij}$ is the encoded edge feature. Incoming messages are summed and
normalized:
\[
\bar m_i
=
\operatorname{LN}\!\left(
\sum_{j\in\mathcal{N}(i)}m_{ij}
\right).
\]
The target node representation is then updated by
\[
h_i'
=
W_{U,2}\,
\operatorname{ReLU}\!\left(
W_{U,1}[h_i\mathbin{\|}\bar m_i]
\right),
\]
where $\mathbin{\|}$ denotes concatenation,
$W_{U,1}\colon\mathbb{R}^{128}\rightarrow\mathbb{R}^{64}$, and
$W_{U,2}\colon\mathbb{R}^{64}\rightarrow\mathbb{R}^{64}$.

Finally, each variable representation is mapped to a scalar logit:
\[
\ell_v
=
w_{\mathrm{out}}^{\top}
\operatorname{ReLU}\!\left(
W_{\mathrm{out}}h_v
\right),
\]
where the final output layer has no bias. The predicted probability
$\sigma(\ell_v)$ indicates whether the early-solution value of variable
$v$ is consistent with its value in the full-budget reference solution.

\FloatBarrier
\section{Early Solution Collection Details}
\label{app:es_collect}
We first quantify the progress of the dual bound using the average
dual-gap decay rate, and then describe the early-solution collection
procedure in Algorithm~\ref{alg:early_solution_collection}. Specifically,
for the $k$-th monitoring interval, the decay rate is defined as
\begin{equation}
    r_k
    =
    \frac{\left|D_{k_a}-D_{k_b}\right|}
         {t_{k_b}-t_{k_a}},
    \label{eq:dual_gap_decay_rate}
\end{equation}
where $k_a$ and $k_b$ denote the observations at the beginning and end
of the interval, respectively; $D_{k_a}$ and $D_{k_b}$ are the
corresponding dual gaps; and $t_{k_a}$ and $t_{k_b}$ are their
wall-clock times. Thus, $r_k$ measures the average absolute reduction
in the dual gap per unit time during the $k$-th monitoring interval.
\begin{algorithm}[ht]
\caption{Early Solution Collection}
\label{alg:early_solution_collection}
\begin{algorithmic}[1]
\REQUIRE MILP instance \(G\), minimum probing time \(T_{\min}\), maximum probing time \(T_{\max}\), window size \(w\), decay threshold \(\epsilon\)
\ENSURE Early solution \(X^{ES}\) and probing time \(t_{\mathrm{probe}}\)
\STATE Initialize an empty callback record list \(\mathcal{R}\).
\STATE Start a probing solver run on \(G\).
\WHILE{the solver is running}
    \IF{a MIPSOL callback occurs at time \(t_k\)}
        \STATE Record the explored solution \(X_k\) and relative dual gap \(D_k\).
        \STATE Append \((t_k,X_k,D_k)\) to \(\mathcal{R}\).
        \IF{\(|\mathcal{R}|\ge w\)}
            \STATE Let \(k_a\) and \(k_b\) be the first and last records in the latest \(w\)-record window.
            \STATE Compute \(r_k\) by Eq.~(\ref{eq:dual_gap_decay_rate}).
            \IF{\(t_k\ge T_{\min}\) and \(r_k<\epsilon\)}
                \STATE Stop the probing run.
            \ENDIF
        \ENDIF
    \ENDIF
    \IF{the elapsed probing time reaches \(T_{\max}\)}
        \STATE Stop the probing run.
    \ENDIF
\ENDWHILE
\STATE Set \(X^{ES}\) to the last explored solution during probing.
\STATE Set \(t_{\mathrm{probe}}\) to the elapsed probing time.
\RETURN \(X^{ES}\) and \(t_{\mathrm{probe}}\).
\end{algorithmic}
\end{algorithm}

\section{Experimental Details}
\label{app:experimental_details}

\subsection{Benchmark Details}

We evaluate on the four benchmark families used by Apollo-MILP
\cite{liu2025apollomilp}. Combinatorial Auctions (CA) instances follow the CATS
generator \cite{leytonbrown2000auction}, Set Covering (SC) follows the classical
construction of \citet{balas1980setcovering}, and Workload Apportionment (WA)
and Item Placement (IP) come from ML4CO \cite{gasse2021ml4co}. CA is a
maximization problem, whereas SC, WA, and IP are minimization problems.
Table~\ref{tab:dataset_statistics} reports average instance sizes.
We use 240 training, 60 validation, and 100 testing instances, following the settings in \citet{han2023gnn} and \citet{liu2025apollomilp}. We train all models with 500 epochs and a learning rate of 0.001.

\begin{table}[t]
  \centering
  
  \setlength{\tabcolsep}{2.5pt}

  \begin{tabular}{lrrrr}
    \toprule
    & CA & SC & IP & WA \\
    \midrule
    Constraint Number & 2590.33 & 3000 & 195 & 64306 \\
    Variable Number & 1500 & 5000 & 1083 & 61000 \\
    Binary Variables Number & 1500 & 5000 & 1050 & 1000 \\
    Continuous Variables Number & 0 & 0 & 33 & 60000 \\
    Integer Variables Number & 0 & 0 & 0 & 0 \\
    \bottomrule
  \end{tabular}
  \caption{Statistical information of the benchmark instances.}
    \label{tab:dataset_statistics}
\end{table}

\subsection{Protocols for Combining EnCore with Downstream Search Methods}
\label{app:downstream_protocol}

This section details how EnCore is integrated with the three downstream
methods evaluated in Section~\ref{sec:experiments}. Following the evaluation
protocol proposed by \citet{han2023gnn} and
\citet{liu2025apollomilp}, every run is given a total wall-clock
budget of $1000$s, and all overhead introduced by EnCore---including the time
spent collecting the early solution---is counted against this budget. Let
$t_{ES}$ denote the time used to obtain the early solution $x^{ES}$ on a given
instance.
\paragraph{Predict-and-Search and Neural Diving.}
Both methods consume the predictor output once, before the final solve, so the
integration requires no change to their search procedures. Given an instance,
we run the solver on the original MILP for $t_{ES}$ seconds and record the
early solution $x^{ES}$ together with the associated solving-process features.
The consistency predictor then assigns each binary variable a score
$\bar{p}_i$, the estimated probability that $x_i^{ES}$ agrees with the
reference solution. Since both methods specify their fixing budgets separately
for the two binary values, selection is value-conditioned: we rank the
variables with $x_i^{ES} = 0$ by $\bar{p}_i$ and retain the top $k_0$, and
independently retain the top $k_1$ among those with $x_i^{ES} = 1$. On CA
under Predict-and-Search, for example, the setting $(k_0, k_1, \Delta) =
(600, 0, 20)$ selects the $600$ zero-valued variables with the highest
consistency scores and no one-valued ones, and the resulting partial
assignment $\{x_i = 0\}$ defines the center of a search neighborhood of radius
$\Delta = 20$. Neural Diving consumes the selection in the same way except
that the assignments are imposed as hard fixings and the reduced MILP is
solved. In both cases the solver is restarted from scratch with a budget of
$1000 - t_{ES}$ seconds, so that the end-to-end wall-clock time matches that
of the baselines.

\paragraph{Apollo-MILP.}
Apollo-MILP performs four prediction--correction rounds with per-round budgets
of $100$s, $100$s, $200$s, and $600$s. Extending the protocol above directly
would require collecting a fresh early solution at the start of every round,
which is too costly: the collection phase alone would consume a
non-negligible fraction of the $100$s rounds. We instead adopt the following
assumption. Since Apollo-MILP realizes its fixing step by imposing bound
constraints on the selected variables rather than by restructuring the
problem, we assume that the early search behavior of the bounded problem stays
close to that of the original instance, so that the early values realized on
the original instance remain informative anchors throughout the iterative
process. Under this assumption, we collect the early solution only once, on
the original instance and in exactly the same way as above, and reuse $x^{ES}$
and its consistency scores in the prediction step of every round. Each round
applies the same value-conditioned selection, restricted to the variables not
yet fixed in earlier rounds. To keep the total budget unchanged, the
collection cost $t_{ES}$ is charged to the final round, whose budget becomes
$600 - t_{ES}$ seconds, while the first three rounds are left untouched; the
overall wall-clock time thus remains $1000$s.

\subsection{Evaluation Protocol}
\label{sec:eval_protocol}

\paragraph{Hardware.}
All experiments are conducted on a Linux server (Ubuntu 20.04.6 LTS) equipped
with an Intel Xeon Platinum 8124M CPU (72 logical cores, 3.50GHz), 256GB of
RAM, and an NVIDIA GeForce RTX 3090 GPU (24GB). Neural network training and
inference run on the GPU, while all solver calls run on the CPU.
The solver configuration (e.g., thread count) is kept identical across all
methods on each benchmark. 

\paragraph{Metrics.}
We report the final feasible objective and its absolute primal gap
\begin{equation}
    \mathrm{gap}_{\mathrm{abs}} = |z - z_{\mathrm{BKS}}|,
\end{equation}
where $z$ is the mean final objective of a method and $z_{\mathrm{BKS}}$ is
the best-known solution (BKS) on the corresponding benchmark. The BKS is
defined as the best mean final objective observed among all evaluated methods,
including a 3,600-second Gurobi run: Gurobi provides this reference on SC, WA,
and IP, while \ours{}-PS provides it on CA. The BKS is thus an in-study
best-observed reference rather than an independently certified optimum. To
quantify the benefit of our method over a downstream baseline, we additionally
report the relative gap reduction
\begin{equation}
    \mathrm{Improvement}
    = \frac{\mathrm{gap}_{\mathrm{base}} - \mathrm{gap}_{\mathrm{ours}}}
           {\mathrm{gap}_{\mathrm{base}}} \times 100\%.
    \label{eq:gap_improvement}
\end{equation}

\subsection{Zero-Shot MIPLIB IIS Transfer Protocol}

The IIS setting traces to \citet{wang2024digmilp}, which formed a two-instance dataset from
MIPLIB: \texttt{iis-glass-cov} was used for training and
\texttt{iis-hc-cov} for testing. Apollo-MILP later
expanded this setting into an eleven-instance subset of MIPLIB 2017
\cite{gleixner2021miplib,liu2025apollomilp}. Specifically, it selected
instances according to similarities measured with 100 human-designed
structural features, then discarded instances whose presolving exceeded 300
seconds or whose inference exceeded GPU memory. Apollo-MILP used eight of the
resulting instances for training and three---\texttt{ramos3},
\texttt{scpj4scip}, and \texttt{scpl4}---for testing. In contrast, we use all
eleven instances exclusively as a zero-shot test set. The instances range from
214 to 200,000 variables and from 32,805 to 2,000,000 nonzero coefficients.

In this experiment, both predictors are trained on the union of the CA, SC, WA, and IP training
sets. They are validated on the union of the corresponding validation sets,
and checkpoint selection uses only this combined validation set. The selected
checkpoints are frozen before evaluation on IIS. No MIPLIB instance is used
for training, validation, fine-tuning, or checkpoint
selection. 

Table~\ref{tab:miplib_iis_average} gives the overall performance while Table~\ref{tab:miplib_iis_instance} gives the instance-level results. The PS
improvement is mainly attributable to \texttt{ramos3} and
\texttt{ex1010-pi}, partially offset by \texttt{scpk4}. Apollo+Ours improves
\texttt{ex1010-pi}, degrades on \texttt{ramos3}, and ties on the remaining
instances. 
\begin{table}[H]
    \centering
    
    \small
    \setlength{\tabcolsep}{5pt}
    \begin{tabular}{@{}llrr@{}}
        \toprule
        Predictor & Downstream & Mean Obj $\downarrow$ & Feas. \\
        \midrule
        Origin & \nd{} & 243.00$^{\dagger}$ & 3/11 \\
        \ours{} & \nd{} & 173.45 & \textbf{11/11} \\
        \midrule
        Origin & \ps{} & 172.00 & 11/11 \\
        \ours{} & \ps{} & \textbf{171.73} & 11/11 \\
        \midrule
        Origin & \apollo{}  & 172.91 & 11/11 \\
        \ours{} & \apollo{} & 172.82 & 11/11 \\
        \bottomrule
    \end{tabular}
    \caption{Zero-shot cross-family transfer to the MIPLIB IIS subset. Mean
    Obj is the mean of the per-instance final objectives. Feas. reports the number of instances with a finite feasible objective. $^{\dagger}$ means computed only over its three feasible
    instances.}
    \label{tab:miplib_iis_average}
\end{table}

\begin{table}[ht]
    \centering
    
    \scriptsize
    \setlength{\tabcolsep}{3.4pt}
    \begin{tabular}{@{}lrrrrrr@{}}
        \toprule
        & \multicolumn{2}{c}{ND} & \multicolumn{2}{c}{PS}
        & \multicolumn{2}{c}{Apollo-MILP} \\
        \cmidrule(lr){2-3}\cmidrule(lr){4-5}\cmidrule(lr){6-7}
        Instance & GCN & Ours & GCN & Ours & GCN & Ours \\
        \midrule
        \texttt{ex1010-pi}       & -- & 242.00 & 237.00 & \textbf{236.00} & 241.00 & 238.00 \\
        \texttt{fast0507}        & -- & \textbf{174.00} & \textbf{174.00} & \textbf{174.00} & \textbf{174.00} & \textbf{174.00} \\
        \texttt{glass-sc}        & -- & \textbf{23.00} & \textbf{23.00} & \textbf{23.00} & \textbf{23.00} & \textbf{23.00} \\
        \texttt{iis-glass-cov}   & -- & \textbf{21.00} & \textbf{21.00} & \textbf{21.00} & \textbf{21.00} & \textbf{21.00} \\
        \texttt{iis-hc-cov}      & -- & \textbf{17.00} & \textbf{17.00} & \textbf{17.00} & \textbf{17.00} & \textbf{17.00} \\
        \texttt{ramos3}          & -- & 235.00 & 229.00 & \textbf{226.00} & 231.00 & 233.00 \\
        \texttt{scpj4scip}       & \textbf{132.00} & \textbf{132.00} & \textbf{132.00} & \textbf{132.00} & \textbf{132.00} & \textbf{132.00} \\
        \texttt{scpk4}           & 328.00 & 330.00 & \textbf{326.00} & 327.00 & 330.00 & 330.00 \\
        \texttt{scpl4}           & \textbf{269.00} & \textbf{269.00} & \textbf{269.00} & \textbf{269.00} & \textbf{269.00} & \textbf{269.00} \\
        \texttt{seymour}         & -- & \textbf{423.00} & \textbf{423.00} & \textbf{423.00} & \textbf{423.00} & \textbf{423.00} \\
        \texttt{v150d30-2hopcds} & -- & 42.00 & \textbf{41.00} & \textbf{41.00} & \textbf{41.00} & \textbf{41.00} \\
        \bottomrule
    \end{tabular}
    \caption{Final objectives on the eleven MIPLIB IIS instances. All instances
    are minimization problems. Bold marks the lowest objective in each row; a
    dash indicates that no finite feasible solution was found.}
    \label{tab:miplib_iis_instance}
\end{table}

\FloatBarrier
\section{Hyperparameter Settings}
\label{app:hyperparameters}

\subsection{Early Solution Collection}
For early-solution collection, we set the sliding-window size to \(w=5\), the
dual-gap decay threshold to \(\epsilon=0.01\%\)/s, the minimum probing time to
\(T_{\min}=20\) seconds, and the maximum probing time to \(T_{\max}=60\)
seconds. We use an ensemble size of \(K=3\) for SC, CA and WA and use \(K=2\) for IP. 

\subsection{Neighborhood Search Parameters}
We report the hyper-parameters we used in our study. \ours{} and baselines use the same hyper-parameters. Table~\ref{tab:partial_solution_parameters} shows the hyperparameters of \ps{} and \ours{}-PS, and the \nd{} and \ours{}-ND uses the same $(k_0, k_1)$. Table~\ref{tab:apollo_hyperparameters} shows the Apollo hyperparameters for baseline \apollo{} and \ours{}-apollo.
\begin{table}[htbp]
    \centering
    \setlength{\tabcolsep}{1pt}
    \small
    \begin{tabular}{lcccc}
        \toprule
        Benchmark & CA & SC & IP & WA \\
        \midrule
        PS+Gurobi & $(600,0,20)$ & $(2000,0,100)$ & $(400,5,10)$ & $(0,500,10)$ \\
        PS+SCIP   & $(400,0,20)$ & $(2000,0,100)$ & $(400,5,1)$  & $(0,600,5)$ \\
        \bottomrule
    \end{tabular}
    \caption{The partial solution size parameters $(k_0, k_1)$ and neighborhood parameter $\Delta$.}
    \label{tab:partial_solution_parameters}
\end{table}
\begin{table}[htbp]
    \centering
    \setlength{\tabcolsep}{1pt}
    \small
    \begin{tabular}{lcccc}
        \toprule
        & CA & SC & IP & WA \\
        \midrule
        Iteration 1
        & $(400,0,60)$
        & $(1000,0,200)$
        & $(100,20,50)$
        & $(20,200,100)$ \\
        
        Iteration 2
        & $(200,0,30)$
        & $(500,0,100)$
        & $(40,15,20)$
        & $(10,100,50)$ \\
        
        Iteration 3
        & $(100,0,15)$
        & $(250,0,50)$
        & $(20,15,10)$
        & $(10,5,5)$ \\
        
        Iteration 4
        & $(50,0,10)$
        & $(10,0,5)$
        & $(5,50,30)$
        & $(1,10,5)$ \\
        \bottomrule
    \end{tabular}
    \caption{Hyperparameters $(k_0^{(i)}, k_1^{(i)}, \Delta^{(i)})$ for Apollo-MILP.}
    \label{tab:apollo_hyperparameters}
\end{table}

\FloatBarrier
\section{Details and Proofs for the Theoretical Analysis}
\label{app:theoretical_proofs}

\subsection{Population Setting and Notation}

Let \(\mathcal D\) be the probability distribution over all triples
consisting of an MILP graph, the early solution produced by the fixed
collection procedure, and the full-budget solution.  Draw
\[
    Z=(G,X^{ES},X^*)\sim\mathcal D.
\]
Let \(\mathcal B(G)\) be the set of binary-variable indices in \(G\).
Assume \(|\mathcal B(G)|\ge1\) almost surely.
To represent the per-instance variable average compactly, conditional on
\(Z\), choose \(J\) from \(\mathcal B(G)\), giving every index probability
\(1/|\mathcal B(G)|\).  We use \(J\) for this random index in population
quantities and \(r\) for a deterministic summation index.

Fix the message-passing depth \(L\).  For \(r\in\mathcal B(G)\), let
\(\mathcal R_r\) denote the part of the static graph and its features within
\(L\) message-passing steps of variable \(r\), and let
\(X^{ES}_{\mathcal R_r}\) denote the early assignments attached to the
variable nodes in this local input.

For any assignment predictor \(h\), define
\begin{equation*}
\begin{aligned}
    \ell_h(Z)
    &:=
    \frac{1}{|\mathcal B(G)|}
    \sum_{r\in\mathcal B(G)}
    \mathbf 1\{h_r\ne x_r^*\},\\
    R(h)
    &:=
    \mathrm E[\ell_h(Z)]
    =
    \Pr(h_J\ne x_J^*),\\
    \operatorname{Acc}(h)
    &:=
    1-R(h).
\end{aligned}
\end{equation*}
Here a direct prediction \(h_r\) is a function of \(\mathcal R_r\), whereas
an induced consistency prediction may also use
\(X^{ES}_{\mathcal R_r}\).  The corresponding posteriors are
\[
\begin{aligned}
    \eta_G&:=\Pr(x_J^*=1\mid \mathcal R_J),\\
    \eta_{ES}&:=\Pr(x_J^*=1\mid
        \mathcal R_J,X^{ES}_{\mathcal R_J}).
\end{aligned}
\]
Their Bayes errors are
\begin{equation*}
\begin{aligned}
    b_G
    &:=
    \mathrm E[\min\{\eta_G,1-\eta_G\}],\\
    b_{ES}
    &:=
    \mathrm E[\min\{\eta_{ES},1-\eta_{ES}\}].
\end{aligned}
\end{equation*}
Thus \(A_{\mathrm{sol}}^*=1-b_G\) is the best population accuracy attainable
from the static local input, and \(1-b_{ES}\) is the best assignment
accuracy attainable when the early assignments are also observed.

\subsection{Consistency--Assignment Equivalence}

\paragraph{Lemma 1 (label--assignment equivalence).}
For \(r\in\mathcal B(G)\), let
\(y_r=\mathbf 1\{x_r^{ES}=x_r^*\}\).  Given a binary consistency prediction
\(\widehat y_r\), retain \(x_r^{ES}\) when \(\widehat y_r=1\) and use its
binary complement otherwise.  The induced assignment \(\widehat x_r\)
satisfies
\begin{equation*}
    \mathbf 1\{\widehat x_r\ne x_r^*\}
    =
    \mathbf 1\{\widehat y_r\ne y_r\}.
\end{equation*}
Equivalently, for a consistency rule \(a\), define
\[
    h_a(G,X^{ES})_r
    :=
    x_r^{ES}\mathbin{\oplus}(1-a(G,X^{ES})_r).
\]
Then
\begin{equation}
    \ell_{h_a}(Z)
    =
    \frac{1}{|\mathcal B(G)|}
    \sum_{r\in\mathcal B(G)}
    \mathbf 1\{a(G,X^{ES})_r\ne y_r\}.
    \label{eq:app_xor_identity}
\end{equation}

\paragraph{Proof.}
For binary values,
\(x_r^*=x_r^{ES}\mathbin{\oplus}(1-y_r)\).  XOR by the same value preserves
equality, so \(h_a(G,X^{ES})_r\ne x_r^*\) exactly when
\(a(G,X^{ES})_r\ne y_r\).  Averaging over \(\mathcal B(G)\) proves
Eq.~(\ref{eq:app_xor_identity}). \(\Box\)

The induced assignment \(h_a\) is used only to compare variable-level
prediction errors.  In the downstream search method, assignments predicted
inconsistent are left free rather than forcibly complemented, so these
identities characterize the variable-level prediction task, while the
downstream objective and runtime effects are evaluated empirically.

Conversely, every direct solution predictor \(h\) defines a consistency rule
\[
    a_h(G,X^{ES})_r
    :=
    \mathbf 1\{h(\mathcal R_r)=x_r^{ES}\}.
\]
The induced predictor satisfies
\(h_{a_h}(G,X^{ES})_r=h(\mathcal R_r)\).
Thus consistency prediction with the augmented input can reproduce every
direct predictor based on the static local input.

Let \(Y_J=\mathbf 1\{x_J^{ES}=x_J^*\}\) and
\[
    \xi_J
    :=
    \Pr(Y_J=1\mid\mathcal R_J,X^{ES}_{\mathcal R_J}).
\]
For a fixed variable \(r\), \(\xi_r\) denotes the corresponding conditional
posterior evaluated at \(r\).
Since \(x_J^{ES}\) is observed in the augmented local input,
\begin{equation*}
    \xi_J
    =
    x_J^{ES}\eta_{ES}
    +(1-x_J^{ES})(1-\eta_{ES}).
\end{equation*}
Consequently,
\[
    \min\{\xi_J,1-\xi_J\}
    =
    \min\{\eta_{ES},1-\eta_{ES}\},
\]
so the Bayes consistency error equals \(b_{ES}\).  Under variable-level
\(0\)--\(1\) prediction loss, the optimal rule predicts \(1\) when
\(\xi_J\ge1/2\) and \(0\) otherwise.  By Lemma~1, its induced assignment
accuracy is \(A_{\mathrm{con}}^*=1-b_{ES}\).
Because the label transformation is bijective after \(x_J^{ES}\) is observed,
this is also the Bayes accuracy of a direct assignment predictor using the
same augmented input.  Theorem~\ref{thm:info-gain} therefore compares the
information in the augmented input with that in the static input, rather than
attributing a Bayes advantage to the consistency relabeling itself.

\subsection{Proof of Theorem~\ref{thm:info-gain}}

The population accuracy difference can be written as
\begin{equation}
    A_{\mathrm{con}}^*-A_{\mathrm{sol}}^*
    =
    b_G-b_{ES}
    \ge0.
    \label{eq:app_bayes_advantage}
\end{equation}
The inequality is strict if there exists an event
\(\mathcal E\in\sigma(\mathcal R_J)\) with \(\Pr(\mathcal E)>0\) such that,
almost surely on \(\mathcal E\),
\[
\begin{aligned}
    \Pr(\eta_{ES}<\tfrac12\mid\mathcal R_J)&>0,\\
    \Pr(\eta_{ES}>\tfrac12\mid\mathcal R_J)&>0.
\end{aligned}
\]
In words, the same static local input can be paired with early assignments
that make either final value more likely.

\paragraph{Proof of Theorem~\ref{thm:info-gain}.}
The tower property gives
\begin{equation}
    \eta_G
    =
    \mathrm E[\eta_{ES}\mid\mathcal R_J].
    \label{eq:app_posterior_tower}
\end{equation}
Using
\(\min\{t,1-t\}=\tfrac12-|t-\tfrac12|\), the Bayes accuracy gap is
\begin{equation*}
\begin{aligned}
    A_{\mathrm{con}}^*-A_{\mathrm{sol}}^*
    &=
    \mathrm E[|\eta_{ES}-\tfrac12|]\\
    &\quad-
    \mathrm E[|\eta_G-\tfrac12|].
\end{aligned}
\end{equation*}
Conditional Jensen's inequality and
Eq.~(\ref{eq:app_posterior_tower}) imply
\[
    \mathrm E[|\eta_{ES}-\tfrac12|\mid\mathcal R_J]
    \ge
    |\eta_G-\tfrac12|.
\]
Taking expectations proves Eq.~(\ref{eq:app_bayes_advantage}).  If the
refined posterior lies on both sides of \(1/2\) with positive conditional
probability, the conditional absolute-value inequality is strict.  The
stated positive-probability condition therefore gives
\(A_{\mathrm{con}}^*>A_{\mathrm{sol}}^*\). \(\Box\)

\paragraph{Scope of the comparison.}
The two optima compared in Theorem~\ref{thm:info-gain} range over functions
of the same receptive field: \(A_{\mathrm{sol}}^*\) is the Bayes accuracy
given the static local input \(\mathcal R_J\), and \(A_{\mathrm{con}}^*\) is
the Bayes accuracy given the augmented local input
\((\mathcal R_J,X^{ES}_{\mathcal R_J})\), at the same fixed depth \(L\).
Statements in the main text about ``any solution predictor'', or about what
``no predictor on instance features alone'' can achieve, therefore quantify
over predictors whose decision at a variable is a measurable function of
its static local input, however large that function class is; they do not
compare against predictors with a strictly larger input scope.  In
particular, if \(X^*\) were almost surely determined by the full graph
\(G\)---for example, a unique optimal solution returned by a deterministic
solver---then the whole-graph posterior would be \(\{0,1\}\)-valued,
posterior crossing could not occur at the whole-graph level, and a
whole-graph predictor would attain perfect accuracy in principle, leaving
no information-theoretic gain for the early solution.  The strict-gain
content of Theorem~\ref{thm:info-gain} is thus per locality level: for
every fixed message-passing depth, augmenting the local input strictly
improves the best achievable accuracy whenever posterior crossing occurs at
that depth.  This is the operative comparison for the models considered in
this line of work, including ours and the baselines, which are all
message-passing GNNs whose receptive field is fixed by the architecture and
is typically far smaller than the benchmark graphs.  Moreover, on practical
benchmarks \(X^*\) retains genuine randomness given \(G\): instances
frequently admit multiple optimal or near-optimal solutions, and the
incumbent returned under a time budget depends on tie-breaking, thread
timing, and budget truncation rather than on \(G\) alone.  Under such
residual randomness the whole-graph posterior is not \(\{0,1\}\)-valued
either, so posterior crossing---and hence a strict gain from the early
solution---can occur at any input scope, including the full graph.

\subsection{Sparse-Correction Identity}

The Bayes consistency rule is
\[
    a^*(G,X^{ES})_r
    :=
    \mathbf 1\{\xi_r\ge\tfrac12\}.
\]
Define the event
\[
    \mathcal C^*
    :=
    \{\xi_J<\tfrac12\}
    =
    \{a^*(G,X^{ES})_J=0\},
\]
and define
\[
    p:=\Pr(Y_J=0),
    \qquad
    \rho:=\Pr(\mathcal C^*).
\]
If \(\rho>0\), also define
\[
    q:=\Pr(Y_J=0\mid\mathcal C^*).
\]
Then \(q>1/2\), and \(a^*\) satisfies
\begin{equation}
\begin{aligned}
    b_{ES}
    =
    R(h_{a^*})
    &=
    p-\rho(2q-1),\\
    \Delta_{\mathrm B}
    :=
    A_{\mathrm{con}}^*-A_{\mathrm{sol}}^*
    &=
    b_G-p+\rho(2q-1).
\end{aligned}
    \label{eq:app_consistency_advantage}
\end{equation}
Thus \(\Delta_{\mathrm B}>0\) exactly when
\begin{equation}
    p<b_G+\rho(2q-1).
    \label{eq:app_population_condition}
\end{equation}
For \(p\ge b_G\) and \(\rho>0\), this is equivalent to
\begin{equation*}
    q>
    \frac12+\frac{p-b_G}{2\rho}.
\end{equation*}
If \(\rho=0\), then \(b_{ES}=p\) and
\(\Delta_{\mathrm B}=b_G-p\).

\paragraph{Derivation.}
For any consistency rule \(a\), retaining an assignment has error
\(1-Y_J\), whereas using its binary complement has error \(Y_J\).  Hence
\begin{equation}
    R(h_a)
    =
    p+
    \mathrm E[(2Y_J-1)(1-a(G,X^{ES})_J)].
    \label{eq:app_consistency_risk_identity}
\end{equation}
On \(\mathcal C^*\), the conditional probability of \(Y_J=0\) is \(q\).
Substituting \(a^*\) into Eq.~(\ref{eq:app_consistency_risk_identity}) gives
\[
    R(h_{a^*})
    =
    p+\rho(1-q)-\rho q
    =
    p-\rho(2q-1).
\]
Moreover,
\[
    q
    =
    1-\mathrm E[\xi_J\mid\mathcal C^*]
    >
    \tfrac12.
\]
The rule \(a^*\) is the pointwise Bayes classifier for \(Y_J\), so
Lemma~1 gives \(R(h_{a^*})=b_{ES}\).  Subtracting this risk from \(b_G\)
proves Eq.~(\ref{eq:app_consistency_advantage}); the remaining statements
follow by rearrangement.  If \(\rho=0\), \(a^*\) retains the early solution
almost surely, and its error is \(p\). \(\Box\)

\subsection{Finite-Sample Analysis}

The population result ranges over all possible rules.  For the finite-sample
result, fix before observing the training instances a finite class
\(\mathcal A\) of \(N:=|\mathcal A|\ge1\) consistency rules whose
variable-level decisions use the augmented local inputs defined above.  Let
\(Z_1,\ldots,Z_m\) be i.i.d.\ training instances from \(\mathcal D\).
Suppose training and test instances follow \(\mathcal D\).  Define
\begin{equation*}
\begin{aligned}
    A_{\mathcal A}^*
    &:=
    \max_{a\in\mathcal A}\operatorname{Acc}(h_a),\\
    \Delta_{\mathcal A}
    &:=
    A_{\mathcal A}^*-A_{\mathrm{sol}}^*
    =
    b_G-\min_{a\in\mathcal A}R(h_a).
\end{aligned}
\end{equation*}
Recall
\(\Delta_{\mathrm B}:=A_{\mathrm{con}}^*-A_{\mathrm{sol}}^*\), and define the
finite-class approximation gap
\[
    \alpha_{\mathcal A}
    :=
    A_{\mathrm{con}}^*-A_{\mathcal A}^*.
\]
Then
\(\Delta_{\mathcal A}=\Delta_{\mathrm B}-\alpha_{\mathcal A}\), separating the
population information gain from the approximation cost of the finite class.
For an instance \(Z\), let \(s_a(Z)\) be the proportion of binary variables
that rule \(a\) would flip.  Assume every candidate flips at most a proportion
\(\bar s\in[0,1]\) on average:
\begin{equation*}
\begin{aligned}
    s_a(Z)
    &:=
    \frac{1}{|\mathcal B(G)|}
    \sum_{r\in\mathcal B(G)}(1-a(G,X^{ES})_r),\\
    \max_{a\in\mathcal A}\mathrm E[s_a(Z)]
    &\le\bar s.
\end{aligned}
\end{equation*}
Thus \(\Delta_{\mathcal A}\) is the population accuracy difference between
the best rule available in \(\mathcal A\) and the best predictor based only
on the static local input.  It can be nonpositive if the finite class
does not contain a sufficiently accurate consistency rule.

For a predictor that depends on the realized training sample and any training
randomness, \(R(h)\) and \(\operatorname{Acc}(h)\) below denote conditional
population quantities evaluated on a fresh test triple independent of that
training information.

Let \(a_{\mathrm{keep}}(G,X^{ES})_r\equiv1\).  Then
\(h_{a_{\mathrm{keep}}}(G,X^{ES})=X^{ES}\) and
\(R(h_{a_{\mathrm{keep}}})=p\).  Define the loss change relative to this
baseline by
\[
\begin{aligned}
    D_a(Z)
    &:=
    \ell_{h_a}(Z)-\ell_{h_{a_{\mathrm{keep}}}}(Z),\\
    \widehat D_a
    &:=
    \frac1m\sum_{k=1}^mD_a(Z_k).
\end{aligned}
\]

\paragraph{Lemma 2 (uniform concentration).}
Following the finite-class Bernstein--union-bound argument
\citep{boucheron2013concentration}, we apply concentration to instance-level
loss differences relative to the always-retain rule.
For every \(\delta\in(0,1)\), with probability at least \(1-\delta\),
\begin{equation*}
\begin{aligned}
    \sup_{a\in\mathcal A}
    |\widehat D_a-\mathrm E[D_a]|
    &\le
    \beta_{ES},\\
    \beta_{ES}
    &:=
    \sqrt{
        \frac{2\bar s}{m}\ln\frac{2N}{\delta}
    }
    +
    \frac{4}{3m}\ln\frac{2N}{\delta}.
\end{aligned}
\end{equation*}

\paragraph{Proof.}
A rule differs from the always-retain baseline only on assignments that it
marks inconsistent, so
\[
\begin{aligned}
    |D_a(Z)|&\le s_a(Z)\le1,\\
    \operatorname{Var}(D_a(Z))
    &\le\mathrm E[D_a(Z)^2]
    \le\mathrm E[s_a(Z)]
    \le\bar s.
\end{aligned}
\]
Thus, when candidate corrections modify only a small proportion of early
assignments, their loss differences have lower variance; this is the
finite-class sparse-correction effect captured by \(\bar s\).
Also, \(|D_a(Z)-\mathrm E[D_a]|\le2\).  (The sharper bound
\(|D_a(Z)-\mathrm E[D_a]|\le s_a(Z)+\bar s\le1+\bar s\) is available, since
\(|D_a(Z)|\le s_a(Z)\) and \(|\mathrm E[D_a]|\le\bar s\); we retain the
simpler constant \(2\), which only inflates the second-order term and keeps
the penalty identical to the main-text statement.)  To make the additive
constant explicit, start from the two-sided Bernstein inequality in
exponential form
\citep[][Theorem~2.10]{boucheron2013concentration}: for a variance proxy
\(\sigma^{2}\) and centered range \(c\),
\[
    \Pr\!\left(
        |\widehat D_a-\mathrm E[D_a]|\ge\epsilon
    \right)
    \le
    2\exp\!\left(
        -\frac{m\epsilon^{2}}{2\sigma^{2}+2c\epsilon/3}
    \right).
\]
Bounding the right-hand side by \(2e^{-t}\) gives the quadratic condition
\(m\epsilon^{2}\ge2\sigma^{2}t+2ct\epsilon/3\), whose positive root
satisfies
\[
    \epsilon
    =
    \frac{ct}{3m}
    +\sqrt{\frac{2\sigma^{2}t}{m}+\frac{c^{2}t^{2}}{9m^{2}}}
    \le
    \sqrt{\frac{2\sigma^{2}t}{m}}+\frac{2ct}{3m},
\]
where the inequality uses \(\sqrt{a+b}\le\sqrt a+\sqrt b\).  Substituting
the variance proxy \(\sigma^{2}=\bar s\) and the range \(c=2\) shows that
the deviation \(\sqrt{2\bar s t/m}+4t/(3m)\) is sufficient; the additive
form is conservative relative to the exponential form because of this
square-root relaxation.  That is, for every fixed \(a\) and \(t>0\),
\[
    \Pr\left(
        |\widehat D_a-\mathrm E[D_a]|
        >
        \sqrt{\frac{2\bar s t}{m}}+\frac{4t}{3m}
    \right)
    \le2e^{-t}.
\]
Setting \(t=\ln(2N/\delta)\) and applying a union bound over
\(\mathcal A\) proves the claim. \(\Box\)

\paragraph{Theorem~\ref{thm:finite-sample} (restated).}
Let \(\widehat a\) minimize the empirical \(0\)--\(1\) risk over
\(\mathcal A\), and write \(\widehat h_{ES}:=h_{\widehat a}\).  Let
\(\widehat h_{\mathrm{sol}}\) be any possibly data-dependent direct predictor
determined without access to the fresh test triple and whose test-time
decisions use only the static local inputs.  Define
\begin{equation*}
    \varepsilon_m(\delta)
    :=
    2\sqrt{
        \frac{2\bar s}{m}\ln\frac{2N}{\delta}
    }
    +
    \frac{8}{3m}\ln\frac{2N}{\delta}.
\end{equation*}
With probability at least \(1-\delta\),
\begin{equation*}
    \operatorname{Acc}(\widehat h_{ES})
    -
    \operatorname{Acc}(\widehat h_{\mathrm{sol}})
    \ge
    \Delta_{\mathcal A}-\varepsilon_m(\delta)
    =
    \Delta_{\mathrm B}-\alpha_{\mathcal A}
    -\varepsilon_m(\delta).
\end{equation*}
On the same event,
\[
    R(\widehat h_{ES})
    -
    \min_{a\in\mathcal A}R(h_a)
    \le\varepsilon_m(\delta).
\]

\paragraph{Proof.}
Subtracting the same empirical loss of \(a_{\mathrm{keep}}\) from every
candidate preserves the empirical minimizer.  On the event in Lemma~2,
choose
\[
    a_{\mathcal A}^*
    \in
    \operatorname*{arg\,min}_{a\in\mathcal A}R(h_a).
\]
Then
\[
\begin{aligned}
    \mathrm E[D_{\widehat a}]
    &\le
    \widehat D_{\widehat a}+\beta_{ES}
    \le
    \widehat D_{a_{\mathcal A}^*}+\beta_{ES}\\
    &\le
    \mathrm E[D_{a_{\mathcal A}^*}]+2\beta_{ES}.
\end{aligned}
\]
Because \(R(h_a)=p+\mathrm E[D_a]\), it follows that
\begin{equation}
\begin{aligned}
    R(\widehat h_{ES})
    &\le
    R(h_{a_{\mathcal A}^*})+2\beta_{ES}\\
    &=1-A_{\mathcal A}^*+2\beta_{ES}\\
    &=b_G-\Delta_{\mathcal A}+2\beta_{ES}.
\end{aligned}
    \label{eq:app_learned_consistency_risk}
\end{equation}
Condition on the realized training sample.  Any resulting direct predictor
whose test decision uses only the static local input has, by Bayes
optimality,
\begin{equation}
    R(\widehat h_{\mathrm{sol}})\ge b_G.
    \label{eq:app_static_bayes_lower_bound}
\end{equation}
Combining Eqs.~(\ref{eq:app_learned_consistency_risk}) and
(\ref{eq:app_static_bayes_lower_bound}) gives
\[
\begin{aligned}
    \operatorname{Acc}(\widehat h_{ES})
    -
    \operatorname{Acc}(\widehat h_{\mathrm{sol}})
    &=
    R(\widehat h_{\mathrm{sol}})-R(\widehat h_{ES})\\
    &\ge
    \Delta_{\mathcal A}-2\beta_{ES}.
\end{aligned}
\]
Since \(\varepsilon_m(\delta)=2\beta_{ES}\), the theorem follows. \(\Box\)

\paragraph{A sufficient sample size.}
If \(\Delta_{\mathcal A}>0\) and
\begin{equation}
    m>
    \max\left\{
        \frac{32\bar s\ln(2N/\delta)}{\Delta_{\mathcal A}^2},
        \frac{16\ln(2N/\delta)}{3\Delta_{\mathcal A}}
    \right\},
    \label{eq:app_theory_sample_complexity}
\end{equation}
then \(\varepsilon_m(\delta)<\Delta_{\mathcal A}\).  Consequently, with probability at
least \(1-\delta\),
\(\operatorname{Acc}(\widehat h_{ES})
>\operatorname{Acc}(\widehat h_{\mathrm{sol}})\).

\paragraph{Proof.}
Let \(\Lambda:=\ln(2N/\delta)\).  The two bounds in
Eq.~(\ref{eq:app_theory_sample_complexity}) imply
\[
    2\sqrt{\frac{2\bar s\Lambda}{m}}
    <
    \frac{\Delta_{\mathcal A}}{2},
    \qquad
    \frac{8\Lambda}{3m}
    <
    \frac{\Delta_{\mathcal A}}{2}.
\]
Adding the inequalities gives
\(\varepsilon_m(\delta)<\Delta_{\mathcal A}\);
Theorem~\ref{thm:finite-sample} then yields the claim.
\(\Box\)

\subsection{Illustrative Calculations}
\label{app:theory_example}

\paragraph{Population gain.}
The sparse-correction condition can hold even when the early solution itself
is less accurate than direct prediction.  For example, take
\[
    p=0.21,\qquad
    A_{\mathrm{sol}}^*=0.82,\qquad
    \rho=0.12,\qquad
    q=0.70.
\]
These numerical values are purely illustrative.  The right-hand side of
Eq.~(\ref{eq:app_population_condition}) is
\[
    1-A_{\mathrm{sol}}^*+\rho(2q-1)
    =
    0.18+0.12(0.40)
    =
    0.228.
\]
Thus \(p=0.21<0.228\).  The early solution alone has accuracy \(0.79\),
which is below the direct-prediction accuracy \(0.82\), whereas identifying
the inconsistent assignments gives
\[
    A_{\mathrm{con}}^*
    =
    1-\left[0.21-0.12(0.40)\right]
    =
    0.838.
\]
The early-solution input therefore raises the accuracy from \(0.82\) to
\(0.838\), an improvement of \(1.8\) percentage points.

\paragraph{Finite-sample bound.}
For a numerical evaluation of the bound, consider a fixed class of \(N=4\)
correction rules constructed to satisfy \(\mathrm E[s_a]\le0.0180\).  The value
\(\bar s=0.0180\) uses the \(1.80\%\) median WA early-to-final difference in
Figure~\ref{fig:early_solution_quality} as an illustrative correction-budget
scale, with \(\mathrm E[s_a]\le\bar s\) retained as the stated rule-class
assumption.  Set the confidence level to \(1-\delta=0.9\) and take a
deliberately conservative illustrative class advantage
\(\Delta_{\mathcal A}=0.12\).
Here
\[
    \ln\frac{2N}{\delta}
    =
    \ln 80
    \approx
    4.382.
\]
The two terms on the right-hand side of
Eq.~(\ref{eq:app_theory_sample_complexity}) are
\[
\begin{aligned}
    \frac{32\bar s\ln(2N/\delta)}{\Delta_{\mathcal A}^2}
    &\approx175.3,\\
    \frac{16\ln(2N/\delta)}{3\Delta_{\mathcal A}}
    &\approx194.8.
\end{aligned}
\]
Thus the convenient sufficient condition in
Eq.~(\ref{eq:app_theory_sample_complexity}) would require
\(m=195\).  It is conservative because it separately bounds each of the two
terms in \(\varepsilon_m\) by \(\Delta_{\mathcal A}/2\).  Directly solving
\(\varepsilon_m(0.1)<0.12\), namely
\[
\displaystyle
    2\sqrt{\frac{2(0.0180)}{m}\ln80}
    +\frac{8}{3m}\ln80
    <0.12,
\]
gives \(m>188.19\), so the smallest integer sample size satisfying the exact
inequality is \(m=189\).

\paragraph{Consistency requirement on the parameters.}
The illustrative values above are not jointly free.  A rule that flips a
proportion \(s_a(Z)\) of assignments changes the induced assignment on at
most that fraction of variables, so its accuracy exceeds the always-retain
accuracy by at most the expected flip rate:
\(\operatorname{Acc}(h_a)\le(1-p)+\mathrm E[s_a]\le(1-p)+\bar s\).
Consequently,
\[
    \Delta_{\mathcal A}
    =
    A_{\mathcal A}^*-A_{\mathrm{sol}}^*
    \le
    b_G-p+\bar s,
\]
and the choice \(\bar s=0.0180\), \(\Delta_{\mathcal A}=0.12\) implicitly
requires \(b_G-p\ge0.102\): the Bayes error of static local prediction must
exceed the early-solution error rate by roughly ten percentage points or
more.  On WA, where the \(1.80\%\) median early-to-final difference
suggests \(p\approx0.018\), this amounts to \(b_G\gtrsim0.12\); that is,
even the best predictor using only the static local input must mislabel at
least about \(12\%\) of the binary variables.  This is a substantive assumption
about the difficulty of static prediction on the benchmark, not a
consequence of the theorems, and we state it so that the example is read as
a self-consistent operating point rather than a measured one.  Smaller,
still positive class advantages remain valid and simply rescale the
sufficient sample size, which grows like \(\Delta_{\mathcal A}^{-2}\) in
Eq.~(\ref{eq:app_theory_sample_complexity}).

Each MILP instance is one independent observation, so labels within an
instance may be arbitrarily dependent.  Accordingly, the penalty depends on
\(m\), the number of instances, rather than the total number of variable
labels.  The result assumes a fixed finite rule class selected by empirical
variable-level \(0\)--\(1\) prediction loss.  Its formal scope is finite-rule
model selection, providing an abstraction that isolates instance-level
estimation and the sparse-correction variance effect.  In practice the
predictor is a GNN trained by stochastic optimization with validation-based
checkpoint selection, a procedure better described as comparing a small,
data-dependent set of candidates---the checkpoints and configurations
actually evaluated---than as exact empirical risk minimization over a class
fixed in advance.  Theorem~\ref{thm:finite-sample} should therefore be read
as an idealized account of that selection stage: it isolates the
instance-level estimation cost and explains why sparse correction classes
are cheap to select among, but it is not a uniform-convergence guarantee
for the full GNN function class.  Statements in the main text about the
gain persisting under finite-sample model selection refer to this
abstraction layer.

\paragraph{Relation to the deployed algorithm.}
Two design choices of the deployed system fall outside the formal
statements above and are worth flagging explicitly.  First, the analysis
scores rules by variable-level \(0\)--\(1\) accuracy---equivalently,
thresholding the posterior at \(1/2\) over \emph{all} binary
variables---whereas the deployed method ranks variables by the predicted
consistency probability and fixes only the top-ranked ones under a budget
chosen by the downstream method.  The theorems thus quantify the
information available in the scores; how the scores are consumed by a
ranking-based fixing rule is evaluated empirically.  Second, the posteriors
in this appendix condition on a single early solution, whereas at
inference time the system averages, per variable, the logits computed from
the last \(K\) improving solutions after aligning them to the reference
solution \(X^{ES}\).  Because conditioning on the tuple of the last \(K\)
solutions only enlarges the input, Theorem~\ref{thm:info-gain} applies
verbatim to the ensembled variant, while the specific
alignment-and-averaging estimator is a practical variance-reduction
heuristic whose effect is evaluated empirically.

\end{document}